\crefname{equation}{Eq.\!}{Eqs.\!}
\crefname{figure}{Fig.\!}{Figs.\!}
\setlist{leftmargin=*,nolistsep,noitemsep,topsep=0em}  
\newif\ifmaintextonly
\newcommand{\longname}{Residual Actor-Critic}
\newcommand{\shortname}{Res-AC}
\icmltitlerunning{Characterizing the Gap Between Actor-Critic and Policy Gradient}
\begin{document}

\twocolumn[
\icmltitle{Characterizing the Gap Between Actor-Critic and Policy Gradient}



\icmlsetsymbol{equal}{*}

\begin{icmlauthorlist}
\icmlauthor{Junfeng Wen}{ua}
\icmlauthor{Saurabh Kumar}{stf}
\icmlauthor{Ramki Gummadi}{goo}
\icmlauthor{Dale Schuurmans}{ua,goo}
\end{icmlauthorlist}

\icmlaffiliation{ua}{Department of Computing Science, University of Alberta, Edmonton, Canada}
\icmlaffiliation{stf}{Stanford University}
\icmlaffiliation{goo}{Google Brain}

\icmlcorrespondingauthor{Junfeng Wen}{junfengwen@gmail.com}

\icmlkeywords{Actor-Critic}

\vskip 0.3in
]



\printAffiliationsAndNotice{}  

\begin{abstract}
Actor-critic (AC) methods are ubiquitous in reinforcement learning.
Although it is understood that AC methods are closely related to policy gradient (PG), their precise connection has not been fully characterized previously.
In this paper, we explain the gap between AC and PG methods by identifying the exact adjustment to the AC objective/gradient that recovers the true policy gradient of the cumulative reward objective (PG).
Furthermore, by viewing the AC method as a two-player Stackelberg game between the actor and critic, we show that the Stackelberg policy gradient can be recovered as a special case of our more general analysis.
Based on these results, we develop practical algorithms, \emph{\longname} and \emph{Stackelberg Actor-Critic}, for estimating the correction between AC and PG and use these to modify the standard AC algorithm.
Experiments on popular tabular and continuous environments show the proposed corrections can improve both the sample efficiency and final performance of existing AC methods.

\end{abstract}

\section{Introduction}

Policy gradient~(PG)~\citep{marbach2001simulation} is the foundation of many reinforcement learning~(RL) algorithms~\citep{sutton2018reinforcement}. 
The basic
PG method~\citep{sutton2000policy} requires access to the state-action values of the current policy, which is challenging to obtain. 
For instance, Monte-Carlo value estimates~\citep{Williams92simplestatistical} are unbiased but suffer from high variance and low sample efficiency. 
To address this issue, actor-critic~(AC) methods learn a parametrized value function~(critic) to estimate the state-action values. 
This approach has inspired many successful  algorithms~\citep{mnih2016asynchronous,lillicrap2016continuous,haarnoja2018soft} that achieve impressive performance on a range of challenging tasks. Despite the success of AC methods, AC and PG have subtle differences that are only partially characterized in the literature \citep{konda2000actor, sutton2000policy}. 
In particular, the distinction between PG and practical AC methods which use an arbitrarily parametrized critic (e.g.\ high-capacity function approximators such as neural networks) is unclear. 
In this paper, we investigate
the intuition that explicitly quantifying this difference 
and minimizing it to increase AC's fidelity to PG can benefit practical AC methods. 

A key difficulty in understanding
the difference between AC and PG is the use of non-linear function approximation.
A non-linear
parametrization violates the compatibility requirement between the actor and critic \citep{sutton2000policy} needed to ensure 
equivalence of PG and the policy improvement step in AC. 
Additionally, the critic in AC, which estimates the policy's state-action values, can be highly inaccurate since it may suffer from bias and is not optimized to convergence. 
Consequently,
the policy improvement step of AC may differ substantially from the corresponding PG update which uses the true state-action values.   

In this paper, we precisely characterize the gap between these two policy improvement updates. 
We start by investigating
natural objective functions 
for AC and show that several classic algorithms can 
just be seen as alternative schedules for
the actor and critic updates under these objectives (\cref{tab:algo_relations}). 
From these observations, we then 
quantify the gap between AC and PG from both objective and gradient perspectives. 
From the objective perspective, we calculate 
the difference between
the actor objective 
and
the original cumulative reward objective used to derive PG; 
while from the gradient perspective, we identify the difference of the policy improvement update in AC when using an arbitrary critic versus following the true policy gradient.

Given this
understanding of the difference between PG and AC, we propose two solutions to close their gap and reduce the bias introduced by the critic in practice. 
First, we develop novel update rules for AC that estimate this gap and add a correction
to standard updates used in AC;
in particular,
we propose a new AC framework, which we call Residual Actor-Critic (Res-AC). 
Second,
we explore AC from a game-theoretic perspective and treat AC as a Stackelberg game~\citep{fiez2020implicit,sinha2017review}. 
By treating the actor and critic as two players, 
we propose a second novel AC framework, Stackelberg Actor-Critic (Stack-AC), and prove that the Stack-AC updates can also close the gap between AC and PG under certain assumptions. 
We implement the Res-AC and Stack-AC update rules by applying them to Soft Actor-Critic~\citep{haarnoja2018soft}. 
We present empirical results that show these modifications 
can improve sample efficiency and final performance in both a tabular domain as well as continuous control tasks which require neural networks to approximate the actor and critic.

\section{Background}\label{sec:background}

Throughout this paper we exploit a matrix-vector notation that 
significantly simplifies the calculations and clarifies the exposition.
Therefore, we first need to introduce the notation and relevant
concepts we leverage in some detail.
Beyond matrix-vector notations, 
\ifmaintextonly
Appendix A
\else
\cref{app:notation} 
\fi
provides alternative representations for some of the key concepts in this paper.

\textbf{Markov Decision Process}. 
Let $\RR_+=\{x\in\RR|x\ge 0\}$.
A \emph{Markov Decision Process} (MDP) is defined as $\Mcal=\{\Scal,\Acal,\vr,P,\vmu_0,\gamma\}$, 
where $\Scal$ is the state space, 
$\Acal$ is the action space, 
$\vr=[\vr_{s_1}^\top,\dots,\vr_{s_{|\Scal|}}^\top]^\top\in\RR^{|\Scal||\Acal|\times 1}$ is reward vector/function,\footnote{
A vector can represent a concept in tabular (finite dimension) or continuous space (infinite dimension). Correspondingly, the inner product $\vx^\top\vy$ can be interpreted as summation or integral.}
$P\in\RR_+^{|\Scal||\Acal|\times|\Scal|}$ is the transition matrix where $P_{sa,\widetilde{s}}=\Pr(\widetilde{s}|s,a)$ for some state $s\in\Scal$, action $a\in\Acal$ and next state $\widetilde{s}\in\Scal$,
$\vmu_0\in\RR_+^{|\Scal|\times 1}$ is the initial state distribution,
and $\gamma\in[0,1)$ is the discount factor.
It is clear that $\vmu_0^\top\onevec_{|\Scal|}=1$ and $P\onevec_{|\Scal|}=\onevec_{|\Scal||\Acal|}$ since $\vmu_0$ and $P$ represent probabilities, where $\onevec$ is the vector of all ones and the subscript represents its dimensionality.

\textbf{Actor and Critic}. 
A \emph{policy} or \emph{actor} can be represented as 
$\vpi\in\RR_+^{|\Scal||\Acal|\times 1}$, 
and
$\vpi_{s_1}\in\RR_+^{|\Acal|\times 1}$ denotes the policy for the first state $s_1$. 
We denote the expanded matrix of $\vpi$ as
\begin{equation}
\setlength\arraycolsep{1pt}
\Pi=
\begin{bmatrix}
\vpi^\top_{s_1} & & \zerovec \\[-1pt]
& \ddots & \\[-1pt]
\zerovec & & \vpi^\top_{s_{|\Scal|}} \\
\end{bmatrix}
\in\RR_+^{|\Scal|\times|\Scal||\Acal|}
\label{eq:pi_expanded}
\end{equation}
We also use
$\vpi_\theta$
to denote
a parametrized policy with parameters $\vtheta$.
In the case of a tabular softmax policy, $\vpi_\theta$ are the per-state softmax transformation of the logits $\vtheta\in\RR^{|\Scal||\Acal|\times 1}$. 
We assume that $\vpi_\theta$ is properly parametrized 
so that it is normalized for every state.\footnote{For example, using softmax transformation for discrete action or Beta distribution for box-constrained action~\citep{chou2017improving}.}
A policy's state-action values (a.k.a.\ $Q$-values) are denoted as $\vq_\theta=\sum_{i=0}^{\infty}(\gamma P\Pi_\theta)^i\vr \in \RR^{|\Scal||\Acal| \times 1}$.
We construct an block-diagonal matrix $Q_\theta\in\RR^{|\Scal|\times|\Scal||\Acal|}$ similar to \cref{eq:pi_expanded} for $\vq_\theta$.
In general $\vq_\theta$ is not available, so one may approximate it using a \emph{critic} $\vq_\phi$ with parameters $\vphi$. 
In the tabular case, $\vq_\phi=\vphi\in\RR^{|\Scal||\Acal|\times 1}$ is a lookup table.
The actor and critic Jacobian matrices are respectively
\begin{align}
(H_\theta)_{i,sa}
=\left[ \frac{\partial (\vpi_\theta)_{sa}}{\partial\theta_i} \right]
\quad
(H_\phi)_{i,sa}
=\left[ \frac{\partial (\vq_\phi)_{sa}}{\partial\phi_i} \right]
\label{eq:actor_jacobian}
\end{align}

An important concept that will be used repeatedly throughout the paper is the (discounted) stationary distribution of a policy $\vpi_\theta$ over all states and actions in the MDP, where initial states are sampled from $\vmu_0$. This distribution is denoted as $\vd_\theta\in\RR_+^{|\Scal||\Acal|\times 1}$ and it is defined as follows:
\begin{align}
d_\theta(s,a)
\!=\!
(1\!-\!\gamma)\!\sum_{i=0}^{\infty}\gamma^i
\Pr(S_i\!=\!s,A_i\!=\!a|P,\vpi_\theta,S_0\!\sim\!\vmu_0).
\nonumber
\end{align} 
A policy's stationary distribution satisfies the following recursion~\citep{wang2007dual}:
\begin{align}
\vd_\theta 
= (1-\gamma)\Pi_\theta^\top\vmu_0
+\gamma \Pi_\theta^\top P^\top \vd_\theta
\label{eq:d_recursion_matrix}
\end{align}
We use $\vd_{\Scal,\theta}\in\RR_+^{|\Scal|\times 1}$ to denote the stationary distribution over the states instead of the state-action pairs.
More precisely, $\vd_{\Scal,\theta} = \Xi \vd_\theta$, where $\Xi$ is the marginalization matrix
\begin{equation}
\setlength\arraycolsep{1pt}
\Xi=
\begin{bmatrix}
\onevec^\top_{|\Acal|} & & \zerovec\\[-1pt]
& \ddots & \\[-1pt]
\zerovec & & \onevec^\top_{|\Acal|} \\
\end{bmatrix}
\in\RR^{|\Scal|\times|\Scal||\Acal|}
\end{equation}
Furthermore, we use
$D_\theta=\Delta(\vd_\theta)$ where $\Delta(\cdot)$
maps a vector to a diagonal matrix with its elements on the main diagonal.

\textbf{Policy Objective and Policy Gradient}. 
The cumulative (discounted) reward objective, or \textit{policy objective}, of a policy $\vpi_\theta$ can be written as~\citep{puterman2014markov}
\begin{align}
\max_{\vtheta}\ J(\vtheta)
&\defeq(1-\gamma)\vmu_0^\top\Pi_\theta\sum_{i=0}^\infty(\gamma P\Pi_\theta)^i\vr\\
&=(1-\gamma)\vmu_0^\top\Pi_\theta\vq_\theta
\label{eq:cumulative_obj}
\\
&=\vd_\theta^\top\vr
\label{eq:cumulative_obj_dual}
\end{align}
where the last equation is due to duality~(\citealt{wang2007dual}, Lemma~9; \citealt{puterman2014markov}, Sec.~6.9).

Although the term \emph{policy gradient} can mean any gradient of a policy in the literature, we use this term to specifically refer to the gradient of \cref{eq:cumulative_obj} throughout the paper. 
It is given by~\citep[Thm.1]{sutton2000policy}
\begin{align}
	\nabla_\theta J = \sum_{s} d_{\Scal,\theta}(s) \sum_a q_\theta(s, a) \nabla_\theta \pi_\theta(s,a)
    \label{eq:policy_gradient}
\end{align}
One can replace the $Q$-values with a critic in the policy gradient to obtain:\footnote{$\nabla^\phi_\theta J$ matches $\nabla_\theta J$ under certain technical assumptions, see~\citet[Thm.2]{sutton2000policy} for further discussion.}
\begin{align}
    \nabla^\phi_\theta J
    \defeq \sum_{s} d_{\Scal,\theta}(s) \sum_a q_\phi(s, a) \nabla_\theta \pi_\theta(s,a)
\label{eq:sutton_grad}
\end{align}
An \textit{actor-critic} (AC) method alternates between improving the policy (actor) using the critic, and estimating the policy's $Q$-values with a critic. 
AC methods are typically derived from \textit{policy iteration}~\citep{sutton2018reinforcement}, which alternates between policy evaluation and policy improvement.
\cref{eq:sutton_grad} can be used to update the actor in the policy improvement step.

\section{Unifying Classical Algorithms}

We consider a unified perspective on AC (and related) algorithms
that will allow us to better understand their relationships and 
ultimately characterize the key differences.
This will set the stage for the main contributions to follow,
although the perspectives are independently useful.

In particular, we
consider AC~\citep{sutton2018reinforcement} from 
two perspectives: (1) starting from the cumulative reward objective (objective perspective) and (2) from the policy gradient (gradient perspective). 
These two perspectives differ in 
where the policy's $Q$-values are approximated with a parametrized critic; in the first case, this is done in the policy objective (\cref{eq:cumulative_obj}) \emph{before} computing the gradient with respect to the policy parameters, while in the second case, the approximation appears in the gradient expression (\cref{eq:policy_gradient}) \emph{after} taking the gradient of the policy objective.

\subsection{Actor-Critic from the Objective Perspective}
\label{sec:actor_critic_obj}

Consider the two standard actor-critic objectives:
\begin{align}
\text{Actor:}\ 
\max_{\vtheta}\ J_\pi(\vtheta,\vphi)
&\defeq (1-\gamma)\vmu_0^\top \Pi_\theta \vq_\phi
\label{eq:actor_obj}\\
\text{Critic:}\ 
\min_{\vphi}\ J_q(\vtheta,\vphi)
&\defeq\frac{1}{2}
\|\vr + \gamma P\Pi_\theta \vq_\phi - \vq_\phi\|_{\vd}^2
\label{eq:q_obj}
\\
=\frac{1}{2}
&(\vr - \Psi_\theta \vq_\phi)^\top D 
(\vr - \Psi_\theta \vq_\phi)   \nonumber \\
\text{where}\qquad
\Psi_\theta
&\defeq I-\gamma P\Pi_\theta \nonumber
\end{align}
The \emph{actor objective} is an approximation of \cref{eq:cumulative_obj} using a parameterized critic $\vq_\phi$ in place of $\vq_\theta$. 
To learn a critic that estimates the policy's $Q$-values, the \emph{critic objective} minimizes the Bellman residual weighted by a state-action distribution $\vd$, which is assumed to have full support.
From a practical standpoint, $\vd$ is equivalent to a (fixed) replay buffer distribution from which the state-actions are sampled. 
An on-policy assumption is equivalent to setting $\vd = \vd_\theta$, the current policy's stationary distribution.
Given these objectives, the partial derivatives are\footnote{Unlike~\cref{eq:policy_gradient}, we use $\partial_\theta$ instead of $\nabla_\theta$ when the quantity in question depends on both $\vtheta$ and $\vphi$.}
\begin{align}
\partial_\theta J_\pi
&=(1-\gamma)\sum_s \mu_0(s)
\sum_a q_\phi(s,a)\nabla_\theta\pi_\theta(s,a)
\label{eq:PI_grad_cont}
\\
\partial_\phi J_q
&=H_\phi\Psi_\theta^\top D (\Psi_\theta \vq_\phi - \vr)
=-H_\phi\Psi_\theta^\top D \vdelta_{\theta,\phi} 
\label{eq:D2f2}
\end{align}
where $\vdelta_{\theta,\phi}:=\vr - \Psi_\theta\vq_\phi$ is the \emph{residual} of the critic. 
From this objective perspective, actor-critic (which we refer to as Actor$_o$-Critic) alternates between ascent on the actor and descent on the critic using their respective gradients\footnote{For notational simplicity, we use the same learning rate $\alpha$ for both the actor and the critic even though this is not needed for any of our conclusions.}:
\begin{alignat}{2}
\text{Actor$_o$:}\qquad
&&\vtheta
&\leftarrow \vtheta + \alpha\ \partial_\theta J_\pi
\label{eq:ac_a_update}
\\
\text{Critic:}\qquad
&&\vphi
&\leftarrow \vphi - \alpha\ \partial_\phi J_q
\label{eq:ac_q_update}
\end{alignat}
As an example, Soft Actor-Critic~\citep{haarnoja2018soft} can be considered as using variants of these updates (with entropy regularization, as shown in 
\ifmaintextonly
Appendix C).
\else
\cref{app:soft_ac}).
\fi

\subsection{Actor-Critic from the Gradient Perspective}\label{sec:ac_grad}
Alternatively, one can consider $\nabla_\theta^\phi J$ in \cref{eq:sutton_grad}
which applies the critic \emph{after} the gradient is derived. 
A2C~\citep{mnih2016asynchronous}
is an example of such an approach.
A key observation is that
$\nabla^\phi_\theta J$ differs from $\partial_\theta J_\pi$ (\cref{eq:PI_grad_cont}) in
terms of the state distribution they consider:
$\partial_\theta J_\pi$ uses the initial state distribution, 
whereas
$\nabla_\theta^\phi J$ uses the policy's stationary distribution. 
Therefore, using
$\nabla^\phi_\theta J$, one can define an alternative actor update
\begin{alignat}{2}
\text{Actor$_g$:}\qquad
&&\vtheta
&\leftarrow \vtheta + \alpha\ \nabla^\phi_\theta J.
\label{eq:ac_grad_a_update}
\end{alignat}
To distinguish this from Actor$_o$-Critic, we refer to the updates 
(\ref{eq:ac_q_update})--(\ref{eq:ac_grad_a_update}) collectively as Actor$_g$-Critic 
(since
the actor update is derived from a gradient perspective). In the 
literature, AC typically refers to what we call Actor$_g$-Critic \citep{sutton2000policy, konda2000actor}. 

\subsection{Unifying and Relating Classical Algorithms}
\label{sec:algo_relations}

Given these two perspectives, we can now show how classical algorithms
can simply be interpreted as alternative
interplays between the actor and critic updates;
see \cref{tab:algo_relations}.

\begin{table}[t]
\setlength{\tabcolsep}{3pt}
\centering
\begin{tabular}{c|c|c}
\hline
\diagbox{$\vtheta$ Update}{$\vphi$ Update}
& $\vphi\leftarrow \vphi-\partial_\phi J_q$ &  $\vq_\phi\leftarrow \vq_\theta$ \\
\hline
$\vtheta\leftarrow\vtheta+\partial_\theta J_\pi$ 
& Actor$_o$-Critic & Policy Gradient$_o$\\
\hline
$\vtheta\leftarrow\vtheta+\nabla^\phi_\theta J$ 
& Actor$_g$-Critic & Policy Gradient$_g$\\
\hline
$\vtheta\leftarrow\vtheta^*(\vq_\phi)$ 
& Q-Learning$^\dagger$ & Policy Iteration\\
\hline
\end{tabular}
\caption{Algorithms based on update rules. Note that $\vtheta\leftarrow\vtheta^*(\vq_\phi)$ is finding the greedy policy w.r.t.\ current $\vq_\phi$ (Policy Improvement), while $\vq_\phi\leftarrow \vq_\theta$ is learning the on-policy values of current $\vpi_\theta$ (Policy Evaluation).
$^\dagger$ indicates that the semi-gradient is used for the critic.
} 
\label{tab:algo_relations}
\end{table}

\textbf{Policy Iteration},
for example,
alternates between policy evaluation and policy improvement.
\emph{Policy evaluation} corresponds to fully optimizing the critic 
to becoming the policy's $Q$-values: $\vq_\phi\leftarrow \vq_\theta$. 
\emph{Policy improvement} 
updates the policy to be greedy with respect to its $Q$-values.
Since the critic is fully optimized, 
it is equivalent to a policy that is greedy with respect to the current critic: $\vtheta\leftarrow\vtheta^*(\vq_\phi)$. 
As noted in \cref{sec:background}, AC methods are typically derived from policy iteration. 
What is often overlooked, however, is that
Actor$_o$-Critic and Actor$_g$-Critic present
distinct instances of this general framework. 

\textbf{Policy Gradient$_g$}.
When the critic is fully optimized for every actor step (i.e., policy evaluation with infinite critic capacity $\vq_\phi\leftarrow \vq_\theta$), we have $\nabla_\theta^\phi J=\nabla_\theta J$, 
which
corresponds to the classical policy gradient method (\cref{eq:policy_gradient}). 
Note that this is different from \textbf{Policy Gradient$_o$}, which is defined to be $\partial_\theta J_\pi$ (\cref{eq:PI_grad_cont}) with $\vq_\phi=\vq_\theta$ (similar to Actor$_o$ from the objective perspective). 
The key difference is that Policy Gradient$_g$ uses the the on-policy distribution $d_{\Scal,\theta}(s)$ to weight the states, whereas
Policy Gradient$_o$ uses the initial state distribution $\mu_0(s)$. 

\textbf{Q-Learning}. 
The critic gradient in \cref{eq:D2f2} is the gradient of the expected squared Bellman residual, 
which involves double-sampling \citep{baird1995residual}. 
Therefore,
a \textit{semi-gradient}~\citep[Sec.9.3]{sutton2018reinforcement} is used
in practice.
If the critic is directly parametrized (i.e., $\vq_\phi=\vphi=\vq \in \mathbb{R}^{|\mathcal{S}||\mathcal{A}|\times 1}$), the critic update becomes
\begin{align}
\vq
\leftarrow\vq-\alpha\ \partial_\phi^{\text{semi}} J_q
&=(I-\alpha D)\vq
+\alpha D \vq'
\label{eq:q_learning_update}
\\
\text{where}\quad 
\partial_\phi^{\text{semi}} J_q
&\defeq -D\vdelta_{\theta,\phi}
\label{eq:semi_critic_grad}
\end{align}
and $\vq'\defeq\vr+\gamma P\Pi_\theta\vq$ is the \emph{target value}, whose gradient is ignored. 
When the policy is fully optimized w.r.t.\ 
$\vq$ 
(last row of \cref{tab:algo_relations}), 
$\gamma P\Pi_\theta\vq$ will choose the maximum next-state value, 
making $\vq'$ the usual Q-Learning target. 
When $D=D_\theta$ in \cref{eq:q_learning_update},
$\vq$ is updated according to on-policy experience,
yielding the on-policy Q-Learning algorithm.

\section{\longname}\label{sec:res_ac}

Based on the unified perspective, we now present one of our key contributions, which is a characterization of the gap between AC and PG methods,
both in terms
of the objectives~(\cref{sec:gap_obj}) and the gradients~(\cref{sec:grad_perspective}).
Then we propose a practical algorithm to reduce the gap/bias introduced by the critic in \cref{sec:updaterules}.

To begin, note that the
actor objective (\ref{eq:actor_obj})
differs from the policy objective (\ref{eq:cumulative_obj})
in that the critic value function $\vq_\phi$ is independent of the policy. 
Therefore, there is a discrepancy between the policy gradient $\nabla_\theta J$ and the partial derivative $\partial_\theta J_\pi$ of the actor objective. 

\subsection{Objective Perspective}\label{sec:obj_perspective}
\label{sec:gap_obj}

To characterize the gap from the objective perspective, consider
the difference between the policy objective (\ref{eq:cumulative_obj}) and the actor objective (\ref{eq:actor_obj}) in $\text{Actor}_o$-Critic. 
Using the dual formulation of the policy objective (\ref{eq:cumulative_obj_dual}), 
the difference is: 
\begin{align}
\vd_\theta^\top\vr
\!-\!(1\!-\!\gamma)\vmu_0^\top \Pi_\theta \vq_\phi
\!=\!\vd_\theta^\top\vr - \vd_\theta^\top\Psi_\theta\vq_\phi
\!=\!\vd^\top_\theta\vdelta_{\theta,\phi}
\label{eq:objective_gap}
\end{align}
where the first equality replaces $(1-\gamma)\vmu_0^\top \Pi_\theta$ with $\vd_\theta^\top\Psi_\theta$ using \cref{eq:d_recursion_matrix}.
Therefore, the difference is
$\vd^\top_\theta\vdelta_{\theta,\phi}$, which is 
the inner product between the stationary distribution of $\vpi_\theta$ 
and the on-policy residual of the critic $\vq_\phi$ under $\vpi_\theta$. 
This also implies that 
the difference between the Actor$_o$-gradient, $\partial_\theta J_\pi$, and the 
policy gradient, $\nabla_\theta J$,
is exactly equal to $\partial_\theta(\vd_\theta^\top\vdelta_{\theta,\phi})$, which we analyze below.

Note that by the product rule,
\begin{align}
\partial_\theta(\vd_\theta^\top\vdelta_{\theta,\phi})
&=\partial_\theta((\vd'_\theta)^\top\vdelta_{\theta,\phi}) + \partial_\theta(\vd_\theta^\top\vdelta'_{\theta,\phi})
\label{eq:d_delta_product_grad}
\end{align}
where $\vd'_\theta$ and $\vdelta'_{\theta,\phi}$ are treated as being independent of the policy parameter $\vtheta$ (i.e., without computing gradients). 
Then the policy gradient can be expressed as
\begin{align}
\nabla_\theta J=
\partial_\theta J_{\pi}
+\partial_\theta ((\vd'_\theta)^\top\vdelta_{\theta,\phi})
+\partial_\theta (\vd^\top_\theta\vdelta'_{\theta,\phi})
\label{eq:three_term_obj}
\end{align}

The first term is given by \cref{eq:PI_grad_cont}, and the second term is
\begin{align}
&\EE_{(S,A)\sim d_\theta}[\partial_\theta\delta_{\theta,\phi}(S,A)]
\nonumber\\
&=\EE_{(S,A)\sim d_\theta,\widetilde{S}\sim P_{SA}}
\left[\gamma\partial_\theta
\sum_{\widetilde{A}}\pi_\theta(\widetilde{S},\widetilde{A})q_\phi(\widetilde{S},\widetilde{A})\right]
\label{eq:third_term_grad}
\end{align}
To further simplify, one can use the $\log$ derivative trick (i.e., $\nabla_\theta \pi_\theta = \pi_\theta \nabla_\theta \log \pi_\theta$) so that
\begin{align}
\partial_\theta J_\pi 
&=(1-\gamma)\EE_{S\sim\mu_0,A\sim\pi_S}
[q_\phi(S,A) \nabla_\theta \log \pi_\theta(S,A)]
\nonumber\\
\partial_\theta &((\vd'_\theta)^\top\vdelta_{\theta,\phi})
=\gamma\EE_{(S,A)\sim d_\theta,\widetilde{S}\sim P_{SA},\widetilde{A}\sim\pi_{\widetilde{S}}}
[
\nonumber\\
&\qquad\qquad\qquad\qquad 
q_\phi(\widetilde{S},\widetilde{A}) \nabla_\theta \log \pi_\theta(\widetilde{S},\widetilde{A})]
\end{align}
Note that these two terms can be combined, using the recursive definition of $\vd_\theta$ (\cref{eq:d_recursion_matrix}), as
\begin{align}
&\partial_\theta 
[J_\pi
+(\vd'_\theta)^\top\vdelta_{\theta,\phi}]
\nonumber\\
&=\EE_{(S,A)\sim d_\theta}
[q_\phi(S,A) \nabla_\theta \log\pi_\theta(S,A)]
\nonumber\\
&=\sum_s d_\theta(s)\sum_a q_\phi(s,a)\nabla_\theta\pi_\theta(s,a)
\label{eq:first_and_third_term_grad}
\end{align}
which is equivalent to $\nabla_\theta^\phi J$ in \cref{eq:sutton_grad}.
This is somewhat surprising. 
Now one
can see that $\nabla_\theta^\phi J$ is in fact maximizing $J_\pi+(\vd'_\theta)^\top\vdelta_{\theta,\phi}$,
which is nearly identical to the policy objective except that
it ignores the last term of (\ref{eq:three_term_obj}), $\partial_\theta (\vd^\top_\theta\vdelta'_{\theta,\phi})$ (i.e., the dependence of $\vd_\theta$ on $\vtheta$).
The following theorem summarizes the gap between policy gradient and actor gradients.
\begin{theorem}
\label{thm:pg_ac_gap}
The gap between the policy gradient $\nabla_\theta J$ and $\partial_\theta J_{\pi}$ used in the Actor$_o$ update is given by
\begin{align*}
\nabla_\theta J
-\partial_\theta J_{\pi}
&=
\partial_\theta 
\EE_{(S,A)\sim d_\theta}
[\delta_{\theta,\phi}(S,A)]
\end{align*}
and the gap between the policy gradient $\nabla_\theta J$ and $\nabla_\theta^\phi J$ used in the Actor$_g$ update is given by
\begin{align*}
\nabla_\theta J
-\nabla_\theta^\phi J
&=
\partial_\theta 
\EE_{(S,A)\sim d_\theta}
[\delta'_{\theta,\phi}(S,A)].
\end{align*}
\end{theorem}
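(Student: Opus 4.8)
The plan is to obtain both identities by differentiating, with respect to $\vtheta$, the objective-level gap already established in \cref{eq:objective_gap}, and then bookkeeping which portion of the resulting total derivative is carried by the stationary distribution $\vd_\theta$ versus the critic residual $\vdelta_{\theta,\phi}$. All of the analytic content is contained in the derivations preceding the statement; the proof is essentially an assembly of \cref{eq:objective_gap,eq:d_delta_product_grad,eq:three_term_obj,eq:first_and_third_term_grad}.

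For the Actor$_o$ gap, I would start from the dual form $J(\vtheta)=\vd_\theta^\top\vr$ in \cref{eq:cumulative_obj_dual} together with $J_\pi(\vtheta,\vphi)=(1-\gamma)\vmu_0^\top\Pi_\theta\vq_\phi$ from \cref{eq:actor_obj}, so that \cref{eq:objective_gap} reads $J-J_\pi=\vd_\theta^\top\vdelta_{\theta,\phi}=\EE_{(S,A)\sim d_\theta}[\delta_{\theta,\phi}(S,A)]$. Since the critic parameter $\vphi$ is held fixed, $J$ depends on $\vtheta$ alone and $\partial_\theta J=\nabla_\theta J$; differentiating both sides with respect to $\vtheta$ then yields $\nabla_\theta J-\partial_\theta J_\pi=\partial_\theta\EE_{(S,A)\sim d_\theta}[\delta_{\theta,\phi}(S,A)]$, where the right-hand derivative is the total derivative in which both $\vd_\theta$ and $\vdelta_{\theta,\phi}$ (through $\Psi_\theta$) depend on $\vtheta$. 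This is the first claim.

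For the Actor$_g$ gap, I would expand the same total derivative by the product rule as in \cref{eq:d_delta_product_grad}, $\partial_\theta(\vd_\theta^\top\vdelta_{\theta,\phi})=\partial_\theta((\vd'_\theta)^\top\vdelta_{\theta,\phi})+\partial_\theta(\vd_\theta^\top\vdelta'_{\theta,\phi})$, and substitute into \cref{eq:three_term_obj}. The crux is then \cref{eq:first_and_third_term_grad}: applying the log-derivative trick to both $\partial_\theta J_\pi$ and $\partial_\theta((\vd'_\theta)^\top\vdelta_{\theta,\phi})$ expresses the former as a $\vmu_0$-weighted expectation and the latter as the one-step-bootstrapped $\gamma P\Pi_\theta$ contribution, and unrolling the recursion $\vd_\theta=(1-\gamma)\Pi_\theta^\top\vmu_0+\gamma\Pi_\theta^\top P^\top\vd_\theta$ of \cref{eq:d_recursion_matrix} telescopes the geometric series so that the two terms collapse into the single $\vd_\theta$-weighted expression $\sum_s d_\theta(s)\sum_a q_\phi(s,a)\nabla_\theta\pi_\theta(s,a)=\nabla^\phi_\theta J$ of \cref{eq:sutton_grad}. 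Subtracting this identity from \cref{eq:three_term_obj} leaves precisely $\nabla_\theta J-\nabla^\phi_\theta J=\partial_\theta(\vd_\theta^\top\vdelta'_{\theta,\phi})=\partial_\theta\EE_{(S,A)\sim d_\theta}[\delta'_{\theta,\phi}(S,A)]$, the second claim.

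I expect the main obstacle to be the telescoping argument behind \cref{eq:first_and_third_term_grad}: one must check that the factor $\gamma P\Pi_\theta$ hidden inside $\vdelta_{\theta,\phi}=\vr-(I-\gamma P\Pi_\theta)\vq_\phi$, when differentiated only through $\Pi_\theta$ with $\vd_\theta$ frozen, produces exactly the next-state term that, combined with the initial-distribution term of $\partial_\theta J_\pi$, reassembles the discounted occupancy $\vd_\theta$ of \cref{eq:d_recursion_matrix}, with no residual $\partial_\theta\vd_\theta$ contribution. A secondary point to handle cleanly is that $\vr$ and $\vq_\phi$ are genuinely $\vtheta$-independent, so every trace of $\vtheta$ in $\vdelta_{\theta,\phi}$ resides in $\Pi_\theta$; this is what makes the product-rule split in \cref{eq:d_delta_product_grad} exhaustive and the two decompositions consistent.
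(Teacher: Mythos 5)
Your proposal is correct and takes essentially the same route as the paper: the first identity comes from differentiating the objective-level gap \cref{eq:objective_gap} with $\vphi$ held fixed, and the second from the product-rule split \cref{eq:d_delta_product_grad} together with the recombination of $\partial_\theta J_\pi$ and $\partial_\theta((\vd'_\theta)^\top\vdelta_{\theta,\phi})$ into $\nabla_\theta^\phi J$ via \cref{eq:first_and_third_term_grad}. The only slight imprecision is describing that last step as ``telescoping a geometric series'': the recursion \cref{eq:d_recursion_matrix} is applied exactly once, merging the $(1-\gamma)\Pi_\theta^\top\vmu_0$-weighted term with the $\gamma\Pi_\theta^\top P^\top\vd_\theta$-weighted bootstrapped term into a single $\vd_\theta$-weighted expectation.
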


We will discuss how to estimate $\partial_\theta (\vd^\top_\theta\vdelta'_{\theta,\phi})$ in \cref{sec:updaterules}.


\subsection{Gradient Perspective}
\label{sec:grad_perspective}

Additional insight is gained by considering the difference between AC and PG
from the gradient perspective.
This section provides a different and possibly more direct way
to correct $\nabla_\theta^\phi J$, where 
the critic replaces the on-policy values \emph{after} the policy gradient is derived. 
Recall the policy gradient and $\nabla_\theta^\phi J$,
and observe that they
can be rewritten 
respectively
as (see 
\ifmaintextonly
Appendix A):
\else
\cref{app:notation}):
\fi
\begin{align}
\nabla_\theta J
\!=\!H_\theta\Delta(\Xi^\top\vd_{\Scal,\theta})\vq_\theta
,\ 
\nabla^\phi_\theta J
\!=\!H_\theta\Delta(\Xi^\top\vd_{\Scal,\theta})\vq_\phi
\label{eq:PG_grad}
\end{align}
Clearly, $\nabla^\phi_\theta J\neq\nabla_\theta J$ unless $\vq_\phi$ satisfies 
specific conditions~\citep[Thm.2]{sutton2000policy}.
Their difference is:
\begin{align}
\nabla_\theta J
-\nabla^\phi_\theta J
&=H_\theta\Delta(\Xi^\top\vd_{\Scal,\theta})(\vq_\theta-\vq_\phi)
\\
&=H_\theta\Delta(\Xi^\top\vd_{\Scal,\theta})\Psi_\theta^{-1}(\vr-\Psi_\theta\vq_\phi)
\\
&=H_\theta\Delta(\Xi^\top\vd_{\Scal,\theta})\Psi_\theta^{-1}\vdelta_{\theta,\phi}
\label{eq:grad_cancel_q}
\end{align}

To further simplify, consider the following theorem.

\begin{restatable}{theorem}{stationary}[Stationary distribution derivative]
\label{thm:sdd}
Let the derivative matrix $\Upsilon$ of the stationary distribution w.r.t.\ the policy parameters to be $\Upsilon_{i,sa}=\frac{\partial d_\theta(s,a)}{\partial\theta_i}$. 
Then
\begin{align}
\Upsilon
=H_\theta\Delta(\Xi^\top \vd_{\Scal,\theta})\Psi^{-1}_\theta.
\nonumber
\end{align}
\end{restatable}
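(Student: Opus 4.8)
The plan is to differentiate the stationary-distribution fixed-point recursion~\eqref{eq:d_recursion_matrix} coordinate-wise in $\vtheta$ and then reassemble the resulting equations into a single matrix identity. Applying $\partial/\partial\theta_i$ to $\vd_\theta = (1-\gamma)\Pi_\theta^\top\vmu_0 + \gamma\Pi_\theta^\top P^\top\vd_\theta$ — where $\vmu_0$ and $P$ do not depend on $\vtheta$ — and using the product rule gives $(I - \gamma\Pi_\theta^\top P^\top)\,\partial_{\theta_i}\vd_\theta = (\partial_{\theta_i}\Pi_\theta^\top)\big[(1-\gamma)\vmu_0 + \gamma P^\top\vd_\theta\big]$, which isolates the sensitivity of $\vd_\theta$ as a single linear solve.

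Next I would simplify the bracketed ``inhomogeneous'' term. Left-multiplying~\eqref{eq:d_recursion_matrix} by the marginalization matrix $\Xi$ and using $\Xi\Pi_\theta^\top = I_{|\Scal|}$ — which holds because $\vpi_\theta$ is normalized at each state — shows that $\vd_{\Scal,\theta} = \Xi\vd_\theta = (1-\gamma)\vmu_0 + \gamma P^\top\vd_\theta$. Hence the bracketed vector is exactly the state-marginal stationary distribution, and the per-coordinate identity becomes $(I - \gamma\Pi_\theta^\top P^\top)\,\partial_{\theta_i}\vd_\theta = (\partial_{\theta_i}\Pi_\theta^\top)\,\vd_{\Scal,\theta}$.

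I would then pass to matrix form. The $(s,a)$-entry of the right-hand side is $\frac{\partial\pi_\theta(s,a)}{\partial\theta_i}\,d_{\Scal,\theta}(s)$, and since $(\Xi^\top\vd_{\Scal,\theta})_{sa} = d_{\Scal,\theta}(s)$, this is precisely the $(i,sa)$-entry of $H_\theta\,\Delta(\Xi^\top\vd_{\Scal,\theta})$. On the left-hand side, $\partial_{\theta_i}\vd_\theta$ is the $i$-th row of $\Upsilon$ written as a column, so transposing each coordinate equation and stacking over $i$ converts $I - \gamma\Pi_\theta^\top P^\top$ into $\Psi_\theta = I - \gamma P\Pi_\theta$ acting on the right. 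This yields $\Upsilon\,\Psi_\theta = H_\theta\,\Delta(\Xi^\top\vd_{\Scal,\theta})$, and since $P\Pi_\theta$ is row-stochastic and $\gamma\in[0,1)$, $\Psi_\theta$ is invertible with $\Psi_\theta^{-1}=\sum_{i\ge 0}(\gamma P\Pi_\theta)^i$, giving $\Upsilon = H_\theta\,\Delta(\Xi^\top\vd_{\Scal,\theta})\,\Psi_\theta^{-1}$ as claimed.

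The main obstacle is bookkeeping rather than conceptual: the recursion is stated with $\vd_\theta$ a column vector and operators acting on the left, whereas $\Upsilon$ and $H_\theta$ index the parameters by rows, so one must track transposes carefully to be sure that stacking the vector equations produces $\Upsilon\Psi_\theta$ (and not $\Psi_\theta^\top\Upsilon^\top$ or a mismatched shape). The only genuinely substantive step is recognizing the identity $\Xi\Pi_\theta^\top = I_{|\Scal|}$, and hence that $(1-\gamma)\vmu_0 + \gamma P^\top\vd_\theta = \vd_{\Scal,\theta}$; everything else follows by direct substitution together with the standard Neumann-series invertibility of $\Psi_\theta$ already implicit in the definition of $\vq_\theta$.
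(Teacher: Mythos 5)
Your proof is correct and is essentially the paper's argument: both differentiate the fixed-point recursion \eqref{eq:d_recursion_matrix}, use that recursion again (after marginalizing over actions via $\Xi\Pi_\theta^\top=I$) to identify the inhomogeneous term as $\Delta(\Xi^\top\vd_{\Scal,\theta})$, and then invert $\Psi_\theta$. The only cosmetic difference is that the paper packages the same computation as an application of the implicit function theorem to $\vf(\vd,\vpi)=\zerovec$ with an intermediate chain-rule factor $\Upsilon=H_\theta\widetilde{\Upsilon}$, whereas you differentiate directly in $\vtheta_i$.
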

All proofs can be found in  
\ifmaintextonly
Appendix B.
\else
\cref{app:proofs}.
\fi
Using this theorem, one can see that \cref{eq:grad_cancel_q} is
in fact
\begin{align}
\nabla_\theta J
-\nabla^\phi_\theta J
=\Upsilon\vdelta_{\theta,\phi}
=\partial_\theta (\vd_\theta^\top\vdelta'_{\theta,\phi}),
\label{eq:pg_actor_g_diff}
\end{align}
which provides an alternative way to prove the gap between PG and AC as shown in \cref{thm:pg_ac_gap}.

\cref{thm:sdd} in itself can be useful in some other contexts. 
A related result can be found in \citet[Eq.(3.5)]{morimura2010derivatives},
which is a recursive form for the derivative of the log stationary distribution.
Our theorem here provides a direct and explicit form for the derivative of the distribution.

\subsection{{\longname} Update Rules}
\label{sec:updaterules}

The key insight from both \cref{sec:obj_perspective,sec:grad_perspective} is that bridging the gap between AC and PG requires the computation of $\partial_\theta (\vd_\theta^\top\vdelta'_{\theta,\phi})$. 
Our next main contribution is to develop a practical strategy to estimate this gap, which will reduce the bias introduced by the critic and bring the actor update in AC closer to the true policy gradient.
This results in a new AC framework we call \textit{\longname}.

To develop a practical estimator,
first note that 
$\vd_\theta^\top\vdelta'_{\theta,\phi}$
can be treated as a dual policy objective (\cref{eq:cumulative_obj_dual}), where the environment's reward $\vr$ is replaced by the residual $\vdelta'_{\theta,\phi}$ of the critic $\vq_\phi$. 
The corresponding primal objective is
\begin{align}
\max_{\vtheta}\ J_{\delta}(\vtheta)
&\defeq (1-\gamma)\vmu_0^\top \Pi_\theta \vw_\theta
\label{eq:actor_obj_third_term}
\end{align}
where $\vw_\theta=\sum_{i=0}^\infty(\gamma P\Pi_\theta)^i\vdelta'_{\theta,\phi}$ is the on-policy $Q$-value associated with the residual reward.
The gradient of (\ref{eq:actor_obj_third_term}) is precisely the desired correction, $\partial_\theta (\vd_\theta^\top\vdelta'_{\theta,\phi})$.
Computing its gradient requires $\vw_\theta$, which can be approximated by introducing a \emph{residual-critic} (or \emph{res-critic}) $\vw_\psi$ with parameters $\vpsi$. 
Concretely, the res-critic solves the following problem:
\begin{align}
\min_{\vpsi}\ J_w(\vpsi;\vtheta,\vphi)
&\defeq\frac{1}{2}
\|\vdelta'_{\theta,\phi} + \gamma P\Pi_\theta \vw_\psi - \vw_\psi\|_{\vd}^2
\label{eq:res_q_obj}
\end{align}

Once we have a relatively accurate res-critic $\vw_\psi$, we apply the PG$_g$ method (see \cref{tab:algo_relations}) and use the following to approximate $\partial_\theta (\vd_\theta^\top\vdelta'_{\theta,\phi})$
\begin{align}
    \nabla^\psi_\theta J_\delta
    \defeq \sum_{s} d_\theta(s) \sum_a w_\psi(s, a) \partial_\theta \pi_\theta(s,a)
\label{eq:sutton_res_grad}
\end{align}
Combining these update rules for the actor and res-critic with the standard AC update rules results in our {\longname}~({\shortname}) framework, which can be summarized as follows:
\begin{alignat}{2}
\text{Actor:}\qquad
&&\vtheta
&\leftarrow \vtheta + \alpha\ 
(\nabla_\theta^\phi J + \nabla_\theta^\psi J_\delta)
\label{eq:res_ac_a_update}
\\
\text{Critic:}\qquad
&&\vphi
&\leftarrow \vphi - \alpha\ \partial_\phi J_q
\label{eq:res_ac_q_update}
\\
\text{Res-Critic:}\qquad
&&\vpsi
&\leftarrow \vpsi - \alpha\ \partial_\psi J_{w}
\label{eq:res_ac_res_q_update}
\end{alignat}

To understand the correction term $\nabla_\theta^\psi J_\delta$ intuitively, note that the residual reward $\vdelta'_{\theta,\phi}$ is signed. 
For an $(s,a)$ with $\delta'_{\theta,\phi}(s,a)>0$, we have $q_\phi(s,a)<r(s,a)+\gamma\EE[q_\phi(\widetilde{s},\widetilde{a})]$, and the agent is incentivized to visit the underestimated region.
On the other hand, if $\delta'_{\theta,\phi}(s,a)<0$, the agent is discouraged to visit the overestimated location.

{\shortname} is a generic framework that can be combined with different AC-based algorithms.
In 
\ifmaintextonly
Appendix C
\else
\cref{app:soft_ac}
\fi
and in the experiments, we show that Soft Actor-Critic (SAC)~\citep{haarnoja2018soft} can be enhanced with a res-critic and the resultant Res-SAC method improves over the original SAC.

\section{Stackelberg Actor-Critic as a Special Case}

Before proceeding to an experimental evaluation of the new {\shortname} framework,
we first present another, somewhat surprising finding that the 
results above are also consistent with the characterization of AC as a Stackelberg game.
That is, previously we focused on correcting the actor's gradient in both Actor$_o$-Critic and Actor$_g$-Critic to obtain the true policy gradient,
whereas now we consider the interplay between actor and critic from a game-theoretic perspective. 
Here we will be able to
show that when treating AC as a Stackelberg game~\citep{sinha2017review}, the \textit{Stackelberg policy gradient} is in fact the true policy gradient under certain conditions. 
This will follow as a special case of the analysis in \cref{sec:gap_obj}.
Moreover, we show that even when the critic update is based on semi-gradient (i.e., with a fixed target), the Stackelberg policy gradient remains unbiased.

For this section we restrict the AC formulation by adding the following assumption. 
\begin{assumption}
\label{assume:stackelberg}
The critic is directly parametrized $\vq_\phi=\vphi=\vq$. 
The critic loss is weighted by the on-policy distribution $\vd=\vd_\theta$ in \cref{eq:q_obj}.
$\vmu_0$ and $\vpi_\theta$ have full support.
\end{assumption}

\subsection{Actor-Critic as a Stackelberg Game}

Actor-critic methods can be considered as a two-player general-sum game, where the actor and critic are the players and the objectives (\cref{eq:actor_obj,eq:q_obj}) are their respective utility/cost functions. 
More specifically, one can treat actor-critic as a \emph{Stackelberg game} in which there is a \emph{leader} who moves first and a \emph{follower} who moves subsequently~\citep{sinha2017review}.
By treating the actor as the leader,
Stackelberg Actor-Critic (Stack-AC) solves the following
\begin{align}
\max_{\vtheta} &\ \left\{J_\pi(\vtheta,\vq)
\middle|
\vq\in\argmin_{\widetilde{\vq}}\ J_q(\vtheta,\widetilde{\vq})
\right\}
\\
\min_{\vq} &\ J_q(\vtheta,\vq)
\end{align}
A key distinction from the original AC is that the actor is now aware of the critic's goal. 
Given that in the ideal case $\vq$ is implicitly a function of $\vtheta$ (i.e., policy evaluation), one may differentiate through $\vq$ to obtain the following \emph{Stackelberg gradient}\footnote{It is the total derivative from the implicit function theorem.}~\citep{fiez2020implicit}
\begin{align}
\vg_{S,\theta}
&\defeq\partial_\theta J_\pi
-(\partial_\theta\partial_q J_q)^\top(\partial_q^2 J_q)^{-1}(\partial_q J_\pi)
\label{eq:stack_grad_pi}
\end{align}

The second order derivative $\partial_q^2J_q$ can be computed, based on the critic objective \cref{eq:q_obj}, as
\begin{align}
\partial_q^2 J_q&=\Psi_\theta^\top D_\theta \Psi_\theta
\label{eq:D22f2}
\end{align}
which is invertible under \cref{assume:stackelberg}, 
hence the Stackelberg gradient is well-defined. 
However, it is unclear what this gradient achieves in the AC setting. 
We show, in the following theorem, that the Stackelberg gradient is in fact the policy gradient of the cumulative reward objective (\ref{eq:cumulative_obj}). 

\begin{restatable}{theorem}{stackelberg}
\label{thm:stack_pg}
Under \cref{assume:stackelberg},
\begin{equation}
\vg_{S,\theta}
=\partial_\theta J_\pi +\partial_\theta(\vd_\theta^\top\vdelta_{\theta,\phi})
=\nabla_\theta J.
\nonumber
\end{equation}
\end{restatable}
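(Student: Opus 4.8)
The plan is to verify the two claimed equalities in turn. The second one, $\partial_\theta J_\pi + \partial_\theta(\vd_\theta^\top\vdelta_{\theta,\phi}) = \nabla_\theta J$, comes essentially for free: \eqref{eq:objective_gap} already gives the pointwise identity $J=(1-\gamma)\vmu_0^\top\Pi_\theta\vq_\phi+\vd_\theta^\top\vdelta_{\theta,\phi}=J_\pi+\vd_\theta^\top\vdelta_{\theta,\phi}$, and since $J$ does not depend on $\vphi$ its gradient with respect to $\vtheta$ is exactly $\nabla_\theta J$ (equivalently, one may cite \cref{thm:pg_ac_gap}). So all the work is in the first equality, and I would prove it by expanding each factor of the Stackelberg gradient \eqref{eq:stack_grad_pi} under \cref{assume:stackelberg}, i.e.\ with direct parametrization $\vq_\phi=\vq$ (so $H_\phi=I$) and on-policy weighting $\vd=\vd_\theta$, which also makes $D_\theta$ positive definite and $\Psi_\theta=I-\gamma P\Pi_\theta$ invertible so that \eqref{eq:stack_grad_pi} is well-defined.

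The key observation is that the inner block of \eqref{eq:stack_grad_pi} collapses to a constant vector. From $J_\pi=(1-\gamma)\vmu_0^\top\Pi_\theta\vq$ we get $\partial_q J_\pi=(1-\gamma)\Pi_\theta^\top\vmu_0$, and rearranging the stationary-distribution recursion \eqref{eq:d_recursion_matrix} gives $(1-\gamma)\Pi_\theta^\top\vmu_0=\Psi_\theta^\top\vd_\theta$. Combining this with $\partial_q^2 J_q=\Psi_\theta^\top D_\theta\Psi_\theta$ from \eqref{eq:D22f2}, we obtain $(\partial_q^2 J_q)^{-1}(\partial_q J_\pi)=\Psi_\theta^{-1}D_\theta^{-1}\Psi_\theta^{-\top}\Psi_\theta^\top\vd_\theta=\Psi_\theta^{-1}D_\theta^{-1}\vd_\theta=\Psi_\theta^{-1}\onevec=\tfrac{1}{1-\gamma}\onevec$, using $D_\theta^{-1}\vd_\theta=\onevec$ and $P\Pi_\theta\onevec=\onevec$ (so that $\Psi_\theta^{-1}\onevec=\sum_{i\ge 0}\gamma^i\onevec$). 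Note this step uses the on-policy choice $\vd=\vd_\theta$ in an essential way, since otherwise $D_\theta^{-1}\vd_\theta\neq\onevec$.

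It then remains to evaluate the cross-derivative term applied to this constant direction, $(\partial_\theta\partial_q J_q)^\top\big(\tfrac{1}{1-\gamma}\onevec\big)$. Since $\tfrac{1}{1-\gamma}\onevec$ does not depend on $\vtheta$, I would use the elementary identity $(\partial_\theta\partial_q J_q)^\top\vx=\partial_\theta(\vx^\top\partial_q J_q)$, valid for any $\vtheta$-independent $\vx$, together with $\partial_q J_q=-\Psi_\theta^\top D_\theta\vdelta_{\theta,\phi}$ (the direct-parametrization special case of \eqref{eq:D2f2}). Because $\onevec^\top\Psi_\theta^\top=(\Psi_\theta\onevec)^\top=(1-\gamma)\onevec^\top$ and $\onevec^\top D_\theta=\vd_\theta^\top$, we have $\onevec^\top\partial_q J_q=-(1-\gamma)\,\vd_\theta^\top\vdelta_{\theta,\phi}$, hence $(\partial_\theta\partial_q J_q)^\top\big(\tfrac{1}{1-\gamma}\onevec\big)=-\partial_\theta(\vd_\theta^\top\vdelta_{\theta,\phi})$. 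Substituting back into \eqref{eq:stack_grad_pi}, the two minus signs cancel and $\vg_{S,\theta}=\partial_\theta J_\pi+\partial_\theta(\vd_\theta^\top\vdelta_{\theta,\phi})$, which with the first paragraph completes the proof.

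The main obstacle, and the point I would be most careful about, is the handling of the cross term: in forming $\partial_\theta\partial_q J_q$ the weighting $D_\theta$ inside the critic objective must be differentiated as a genuine function of $\vtheta$ rather than frozen at its current value. If it were frozen, the same computation would yield only $\partial_\theta((\vd'_\theta)^\top\vdelta_{\theta,\phi})=\nabla_\theta^\phi J$ (cf.\ \eqref{eq:first_and_third_term_grad}) instead of $\nabla_\theta J$; keeping the policy-dependence of $\vd_\theta$ is exactly what supplies the missing $\partial_\theta(\vd_\theta^\top\vdelta'_{\theta,\phi})$ piece of the product rule. The semi-gradient variant mentioned after the theorem would then be handled by rerunning the same argument with $\partial_q^2 J_q$ replaced by the semi-gradient critic Hessian and $\partial_q J_q$ by \eqref{eq:semi_critic_grad}, tracking the analogous cancellations.
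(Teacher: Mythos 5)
Your proposal is correct and follows essentially the same route as the paper's proof: both reduce $(\partial_q^2 J_q)^{-1}(\partial_q J_\pi)$ to $\tfrac{1}{1-\gamma}\onevec$ via $\partial_q J_\pi=(1-\gamma)\Pi_\theta^\top\vmu_0=\Psi_\theta^\top\vd_\theta$ and $D_\theta^{-1}\vd_\theta=\onevec$, then transpose the cross-derivative onto this constant vector and use $\Psi_\theta\onevec=(1-\gamma)\onevec$ to recover $\partial_\theta(\vd_\theta^\top\vdelta_{\theta,\phi})$, concluding with the objective-gap identity \eqref{eq:objective_gap}. Your closing caveat about differentiating $D_\theta$ through $\vtheta$ in the cross term is exactly the issue the paper flags in \cref{sec:stack_updates}.
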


This indicates that, under some conditions, one can compute the gradient of the objective correction in \cref{eq:objective_gap} using $(\partial_\theta\partial_q J_q)^\top(\partial_q^2 J_q)^{-1}(\partial_q J_\pi)$.

\subsection{Semi-Gradient Extension}
As discussed earlier in \cref{sec:algo_relations}, it is common to use a semi-gradient update for the critic to address the double sampling issue.
This section shows that, surprisingly, the Stackelberg gradient remains the true policy gradient even when using semi-gradient for the critic.

From the semi-gradient (\ref{eq:semi_critic_grad}), the \emph{semi-Hessian} is
\begin{align}
(\partial_q^{\text{semi}})^2 J_q
=\partial_q (D_\theta(\vq-\vq'))
=D_\theta.
\label{eq:D22f2_semi}
\end{align}
Compared to \cref{eq:D22f2}, the derivative does not go through the next-state values 
so
$\Psi_\theta$ is now replaced by the identity matrix.
Additionally, the actor objective can be reformulated using \cref{eq:d_recursion_matrix} as
\begin{align}
J_\pi(\vtheta,\vq)
=\vd^\top_\theta\Psi_\theta\vq
=\vd_\theta^\top(\vq-\vq')
\label{eq:actor_obj_d_formula}
\end{align}
Thus the semi-derivative is
\begin{align}
\partial_q^{\text{semi}} J_\pi
=\vd_\theta
\end{align}
Then the Stackelberg gradient based on semi-critic-gradient is given by
\begin{align}
\vg_{S,\theta}^{\text{semi}}
&\!\defeq\!\partial_\theta J_\pi
\!-\!(\partial_\theta\partial_q^{\text{semi}} J_q)^\top\!
((\partial_q^{\text{semi}})^2 J_q)^{-1}\!
(\partial_q^{\text{semi}} J_\pi)
\label{eq:stack_semi_grad_pi}\\
&=\partial_\theta J_\pi
-(\partial_\theta(-D_\theta\vdelta_{\theta,\phi}))^\top
D_\theta^{-1}\vd_\theta
\\
&=\partial_\theta J_\pi
-(\partial_\theta(-D_\theta\vdelta_{\theta,\phi}))^\top
\onevec_{|\Scal||\Acal|}
\\
&=\partial_\theta J_\pi
+\partial_\theta(\vd_\theta^\top \vdelta_{\theta,\phi})
=\vg_{S,\theta}=\nabla_\theta J.
\end{align}
which again corresponds to the true policy gradient.

\subsection{Stack-AC Update Rules}\label{sec:stack_updates}

The update rules for Stack-AC are summarized as follows
\begin{alignat}{2}
\text{Actor:}\qquad
&&\vtheta
&\leftarrow \vtheta + \alpha\ 
\vg^{\text{semi}}_{S,\theta}
\label{eq:stack_ac_a_update}
\\
\text{Critic:}\qquad
&&\vq
&\leftarrow \vq - \alpha\ \partial^{\text{semi}}_q J_q
\label{eq:stack_ac_q_update}
\end{alignat}
where the Stackelberg gradient $\vg^{\text{semi}}_{S,\theta}$ can be approximated using sample-based estimate for each term in \cref{eq:stack_semi_grad_pi}.
Although \cref{eq:stack_semi_grad_pi} requires an inverse-Hessian-vector product and a Jacobian-vector product, they can be efficiently carried out or approximated using standard libraries~\citep{fiez2020implicit}.
Following \citet{fiez2020implicit}, we use a regularized version where $(\partial_q^{\text{semi}})^2 J_q$ is replaced by $(\partial_q^{\text{semi}})^2 J_q+\eta I$ with $\eta=0.5$ in our experiments (\cref{sec:experiments}). This can ensure invertibility and stabilize learning.

There is one caveat when estimating $\partial_\theta\partial_q^{\text{semi}} J_q$ in $\vg_{S,\theta}^{\text{semi}}$ from samples.
The critic objective can be written as $J_q=\frac{1}{2}\vd_\theta^\top\vdelta^2_{\theta,\phi}$. 
$\partial_q^{\text{semi}}J_q$ can be estimated using a batch $\Bcal$ of samples drawn from $\vd_\theta$.
However, $\partial_\theta\partial_q^{\text{semi}} J_q=\partial_\theta(-D_\theta\vdelta_{\theta,\phi})$ is now difficult to estimate because we cannot take derivative of $\vtheta$ through the batch $\Bcal$, which represents the derivative through $D_\theta$. 
As a result, a sample-based estimate of $\partial_\theta\partial_q^{\text{semi}}J_q$ is only estimating $\partial_\theta(-D'_\theta\vdelta_{\theta,\phi})$, where $D'_\theta$ is consider a fixed distribution unrelated to $\vtheta$.

To summarize, the Stackelberg policy gradient can close the gap between AC and PG under certain conditions, even with semi-gradient updates. 
Despite being biased when approximated using samples, we will show in our experiments that Stack-AC can work reasonably well in practice.

\section{Experiments}\label{sec:experiments}

\newcommand{\onethirdfigwidth}{0.32\textwidth}
\newcommand{\halffigwidth}{0.48\textwidth}
\newcommand{\onefourthfigwidth}{0.23\textwidth}

\begin{figure}[t]
\centering
\includegraphics[width=\halffigwidth]{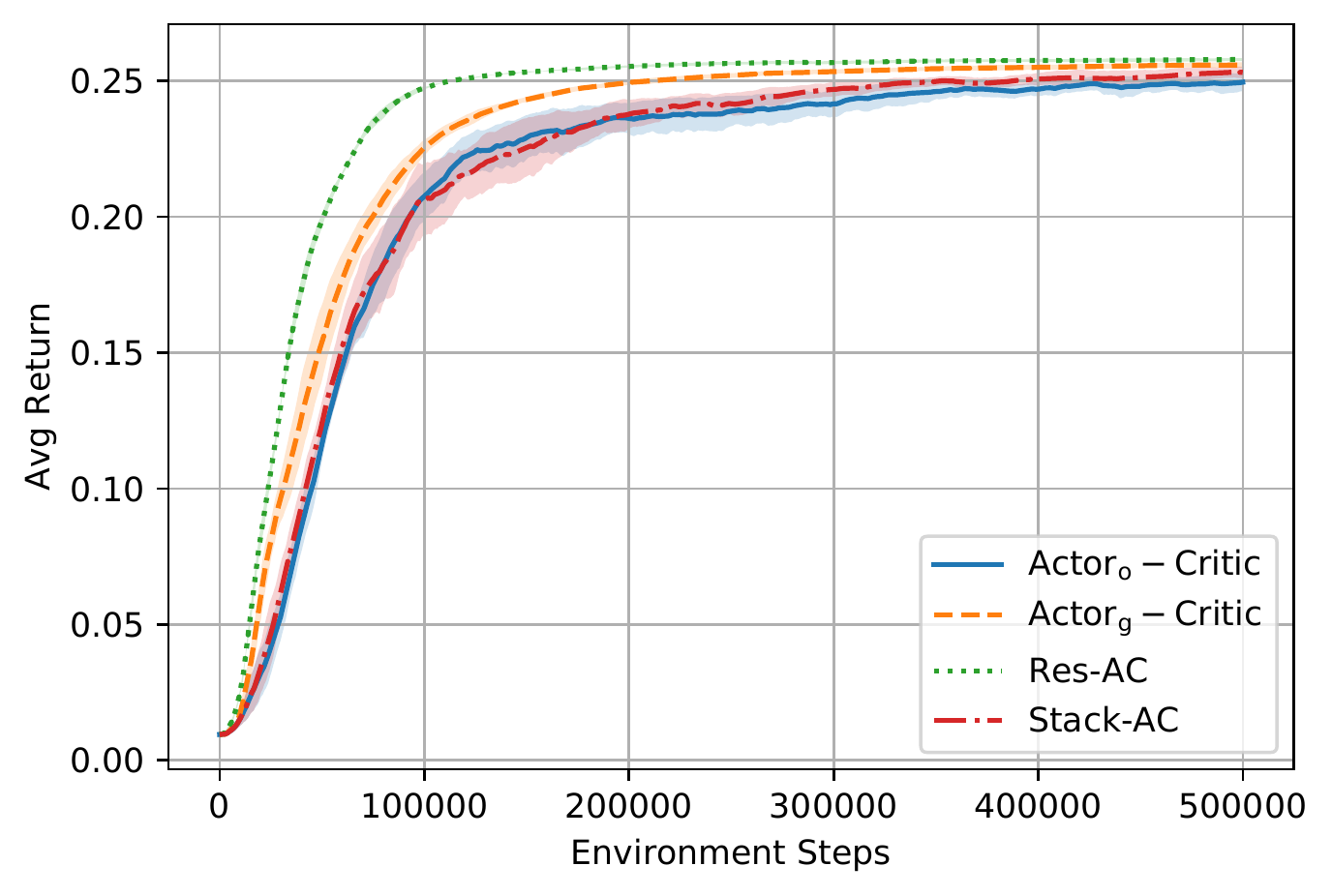}
\caption{Comparison of Actor$_o$-Critic, Actor$_g$-Critic, Stack-AC, and Res-AC on the FourRoom domain, plotting mean with one standard deviation as shaded region over 3 runs. Res-AC outperforms the other methods in both sample efficiency and final performance.}
\label{fig:four_room_sample}
\end{figure}

\begin{figure}[t]
\centering
\includegraphics[width=\halffigwidth]{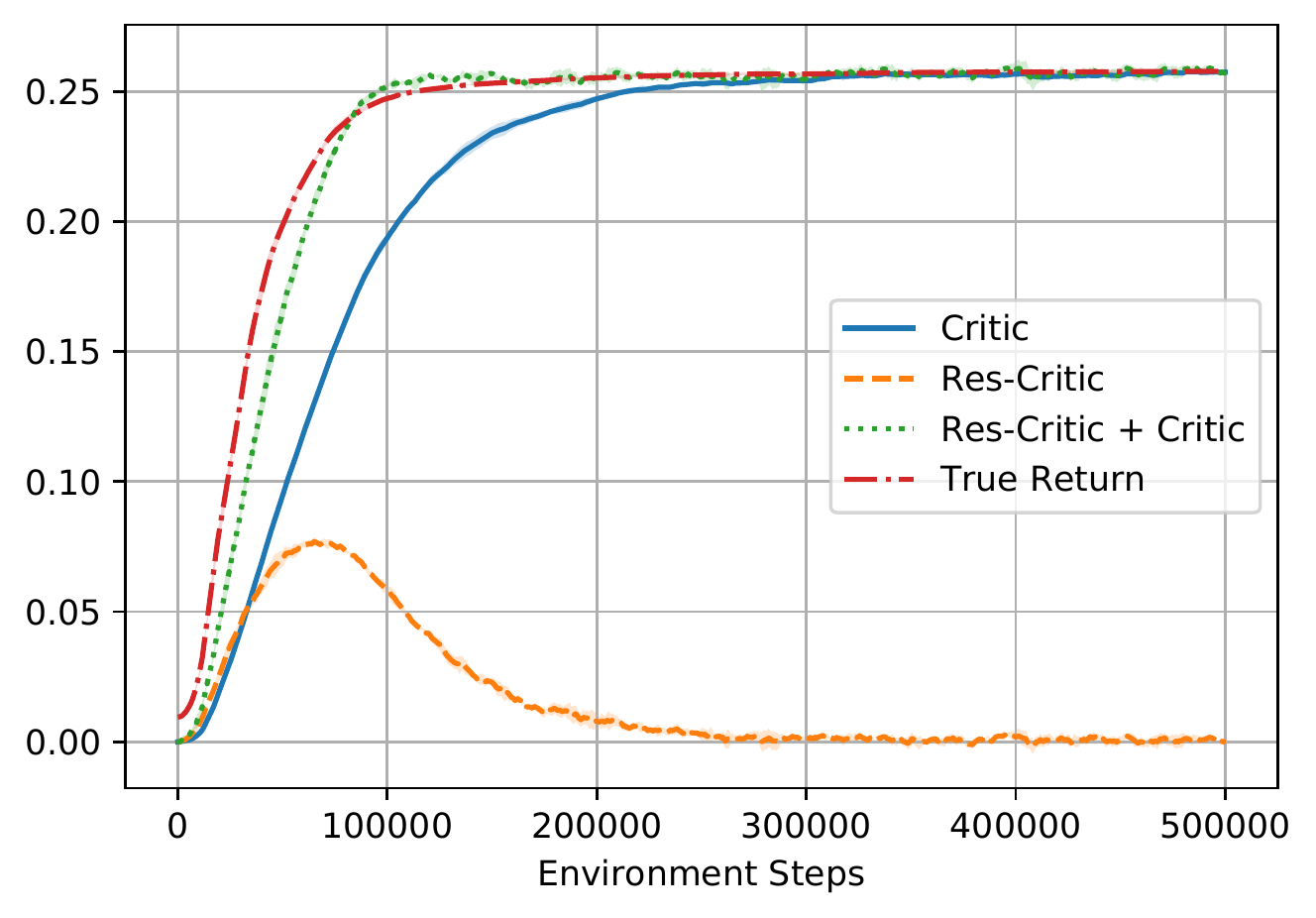}
\caption{The predicted returns of critic and residual critic for the FourRoom domain.
As training proceeds, the sum of the critic and res-critic closely approximates the true return.}
\label{fig:four_room_res_effect}
\end{figure}

The goal of our experiments is to test whether closing the gap between the actor's update in actor-critic (\cref{eq:PI_grad_cont}) and the policy gradient (\cref{eq:policy_gradient}) leads to improved sample efficiency and performance over actor-critic methods. To this end, we conduct experiments within both the FourRoom domain, an illustrative discrete action space environment (see
\ifmaintextonly
Appendix D),
\else
\cref{app:exp_details}),
\fi
and three continuous control environments: Pendulum-v0, Reacher-v2, and HalfCheetah-v2. The environments Reacher-v2 and HalfCheetah-v2 use the MuJoCo physics engine~\citep{todorov2012mujoco}.

On the FourRoom domain, we compare Actor$_o$-Critic and Actor$_g$-Critic to Res-AC and Stack-AC. 
For our continuous control experiments, we modify the actor update of Soft Actor-Critic (SAC) \citep{haarnoja2018soft}, a popular maximum entropy reinforcement learning method, using the updates given by Res-AC and Stack-AC. We refer to the resulting methods as Res-SAC and Stack-SAC, respectively, and we compare SAC to Res-SAC and Stack-SAC on the continuous control tasks. 
A complete derivation of the modified updates of Res-SAC and Stack-SAC can be found in 
\ifmaintextonly
Appendix C.
\else
\cref{app:soft_ac}.
\fi
Additional training details, including hyper-parameter settings and pseudocode, and additional experimental results are in 
\ifmaintextonly
Appendix D.
\else
\cref{app:exp_details}.
\fi

\begin{figure*}[t]
\centering
\includegraphics[width=\textwidth]{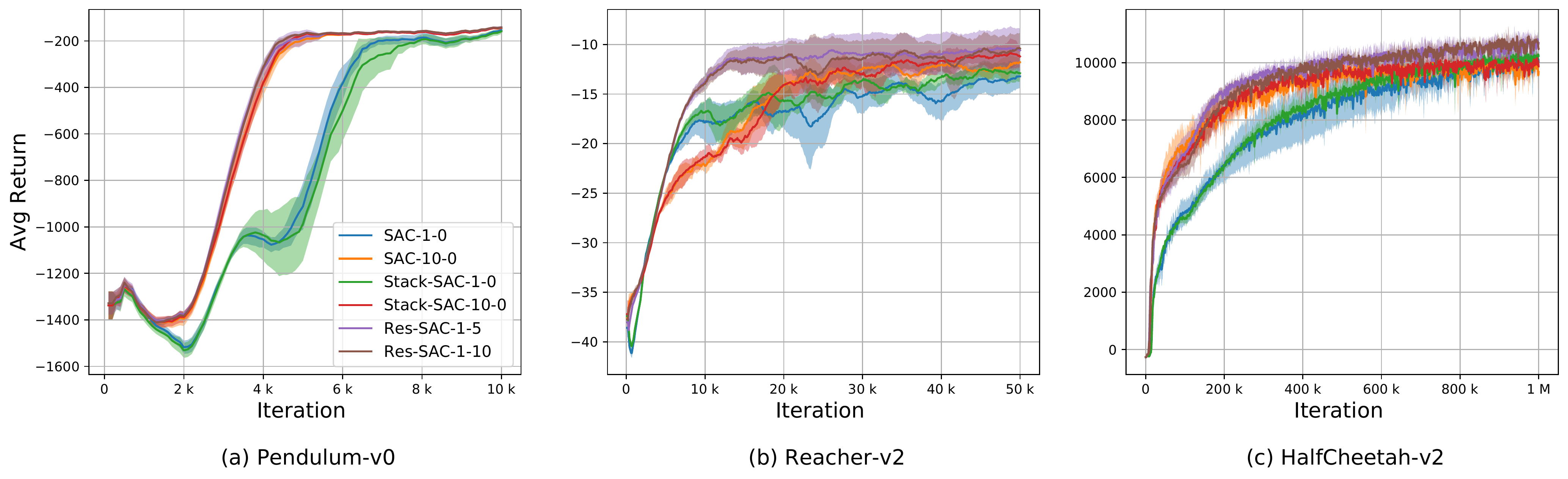}
\caption{Training curves on continuous control tasks. Res-AC consistently outperforms SAC and Stack-SAC on all three environments. Reporting mean with one standard deviation as shaded region over 5 runs.
Each iteration on the $x$-axis corresponds to 10 environment steps.}
\label{fig:mujoco}
\end{figure*}

\subsection{Tabular Experiment}\label{sec:fourroom}

In the FourRoom domain, we use a tabular parameterization $\vq$ for the critic and a softmax tabular actor $\vpi$. 
All our AC methods collect data from the environment and compute updates for the actor and critic~(and res-critic for Res-AC) using this data. 
We train each algorithm with three different random seeds, and we plot the return of the current policy after each episode, where every episode has a fixed length of 300 environment steps (\cref{fig:four_room_sample}). 
The curves correspond to the mean return and the shaded region to one standard deviation 
over the three trials. 

Res-AC enjoys improved sample-efficiency as well as final performance when compared to all other methods. 
Stack-AC and Actor$_o$-Critic achieve a lower final performance than Res-SAC, and they perform comparably to each other. 
Actor$_g$-Critic achieves a similar final return as Res-AC, but requires over 150,000 additional environment steps. 
The relatively poor performance of Stack-AC is not surprising since the sample-based estimate of $\partial_\theta\partial_q^{\text{semi}} J_q$ is inaccurate, as discussed in \cref{sec:stack_updates}. 
In contrast, even though {\shortname} introduces an additional problem of using a res-critic to learn the on-policy return with $\vdelta'_{\theta,\phi}$ as reward, the res-critic significantly accelerates the improvement of the actor. 

To gain further understanding of the residual critic, \cref{fig:four_room_res_effect} plots the predictions of the critic and res-critic for Res-AC on the FourRoom domain.
The sum of the critic and res-critic approximates the true return better than the critic alone throughout training.
This shows that the res-critic can correct the bias introduced by the critic in policy gradient empirically.

\subsection{Continuous Control Experiments}\label{sec:contexp}

We compare SAC to Res-SAC and Stack-AC on Pendulum-v0, Reacher-v2, and HalfCheetah-v2. 
We additionally introduce an update schedule for an algorithm $Algo$ labelled as ``$Algo$-$x$-$y$'', where $x$ and $y$ are the number of gradient updates applied to the critic and res-critic, respectively, for each actor gradient update. 
For example, SAC-10-0 refers to SAC with 10 critic gradient updates per actor update. (Note that the number of res-critic updates here is 0 since SAC does not use a res-critic.) In the original SAC algorithm, only one critic update is performed per actor update. Our decision to optionally perform multiple gradient updates for the critic / res-critic is guided by the intuition that a more accurate critic / res-critic would benefit the gradient updates to the policy.  

For SAC, Stack-SAC, and Res-SAC, we use the hyper-parameters used in \citet{haarnoja2018soft2} on all environments. 
Stack-SAC includes an additional regularization parameter $\eta$, introduced in \cref{sec:stack_updates}, which we set to 0.5 for all experiments. 
Even though the theoretical properties of Stackelberg gradient may not hold for continuous environments (\cref{thm:stack_pg}), we can still implement it for our experiments and investigate its performance. 
The performance of Res-SAC is dependent on clipping the residual that is used as the reward for the res-critic. Concretely, for a clip value $c>0$, the reward for updating the res-critic is computed as $\text{clip}(\vdelta_{\theta,\phi}, -c, c)$. 
Without clipping, large values of the critic's residual can make training the res-critic unstable. 
Additional discussion on the choice of clipping value can be found in 
\ifmaintextonly
Appendix D.2.2.
\else
\cref{app:clip_analysis}.
\fi

The results in \cref{fig:mujoco} show that Res-SAC is consistently more sample-efficient than the other methods and even achieves higher asymptotic performance on Reacher-v2. 
Applying multiple critic updates per actor-update significantly improves the performance of SAC, as SAC-10-0 consistently outperforms SAC-1-0. 
The better performance of Res-SAC over SAC-10-0 suggests that bringing the actor update closer to the true policy gradient in theory translates to empirical benefits. 
In contrast, Stack-SAC performs comparably to SAC -- there is not a clear benefit to using the Stack-SAC actor update rule over the standard SAC actor update. 
This is understandable as \cref{assume:stackelberg} does not hold when the critic uses non-linear function approximation.

\section{Related Work}

\textbf{Actor-critic and policy gradient}.
We review prior works which analyze the relationship between policy gradient and actor-critic and use policy iteration to derive AC methods. 
Actor-critic is typically derived following our Actor$_g$-Critic derivation \citep{mnih2016asynchronous,lillicrap2016continuous,liu2020off,degris2012off, peters2008natural}, and replacing the on-policy values in the policy gradient with a critic can retain the policy gradient under certain assumptions~\citep{konda2000actor,sutton2000policy}. 
In this paper, we show that replacing the on-policy values with a critic corresponds to a partial policy gradient (\cref{thm:pg_ac_gap}). 
We also introduce an alternative derivation of actor-critic from an objective perspective, which has been less explored in the literature, and existing algorithms can be better understood using this perspective. As an example, while the motivation and derivation for SAC \citep{haarnoja2018soft} is based on policy iteration, its policy improvement step of minimizing a KL divergence objective per state, can be explained by the actor objective in our Actor$_o$-Critic framework (see 
\ifmaintextonly
Appendix C
\else
\cref{app:soft_ac}
\fi
for more details).

Several recent works \citep{GMR20, pgqcombine, pgequiv, bridge} demonstrate 
connections between policy and value based methods, which also apply to 
actor-critic methods. Other advancements such as TRPO~\citep{schulman2015trust}, GAE~\citep{schulman2016high}, and TD3~\citep{fujimoto2018addressing} can be considered as orthogonal to our analysis, and can be integrated into our Res-AC and Stack-AC updates. 

\textbf{Learning with Bellman residual}.
Our Res-AC approach uses the the \emph{residual of the critic} to facilitate learning. 
It is slightly different from the \emph{advantage function}~\citep{schulman2016high}: the former is an approximation
error of the critic (which would be zero for the perfect critic) while the latter represents relative gain of an action (which would not be zero unless all actions in the same state have the same value).
GTD2 and TDC~\citep{sutton2009fast,maei2009convergent} learn a form of the residual and use it to update the policy in the linear function approximation regime. In contrast, Res-AC learns a res-critic which is a value function of the residual.
\citet{sun2011incremental} showed that a value function of the residual can be used as an ideal feature/basis to assist learning the reward value function when using linear function approximation. Our analysis is more general, as we show that learning the critic residual value function can be used to reconstruct the true policy gradient for arbitrary policy parametrization.

The residual (or TD error) is used in some other contexts to facilitate training RL agents such as prioritizing which data to sample from a replay buffer to perform updates~\citep{schaul2016prioritized, van2013planning}, or estimating the variance of the return~\citep{sherstan2018comparing}. 
Res-AC is a more direct approach of leveraging the residual to improve the policy, since the res-critic is used in the actor update.
Additionally, \citet{dabney2020distributional} showed that dopamine neurons can respond to the prediction error differently, suggesting that the residual of the value function plays an important role biologically.

\textbf{Game-theoretic perspective.}
The concept of a differential Stackelberg equilibrium was proposed in \citet{fiez2020implicit} with a focus on the
convergence dynamics for learning Stackelberg games. 
A game theoretic framework for model based RL was proposed in \citet{rajeswaran2020game}, where the authors consider a Stackelberg game between an actor maximizing rewards and an agent learning an explicit model of the environment. 
It is computationally expensive as it requires solving actor/model to the optimum in every iteration.
Our Stack-AC, on the other hand, models the actor and critic as the players and shows connections between Stackelberg gradient and true policy gradient, even when using semi-gradient for the critic.
\citet{sinha2017review} provides a more comprehensive review on general bi-level optimization.

\section{Conclusion}

In this work, we characterize the gap between actor-critic and policy gradient methods. By defining the objective functions for the actor and the critic, we elucidate the connections between several classic RL algorithms. Our theoretical results identify the gap between AC and PG from both objective and gradient perspectives, and we propose {\shortname}, which closes this gap. Additionally, by viewing AC as a Stackelberg game, we show that the Stackelberg policy gradient is the true policy gradient under certain conditions. An empirical study on tabular and continuous environments illustrates that applying {\shortname} modifications to update rules of actor-critic methods improves sample efficiency and performance. 
Investigating the convergence guarantees of {\shortname} and developing Stack-AC methods where the critic is the leader are exciting directions for future work.

\section{Acknowledgements}

We would like to thank Robert Dadashi, Yundi Qian and anonymous reviewers
for constructive feedback.
This work is partially supported by NSERC, Amii, a Canada CIFAR AI Chair, an NSF Graduate Research Fellowship, and the Stanford Knight Hennessy Fellowship.

\bibliography{ref}
\bibliographystyle{apalike}

\clearpage
\newpage
\appendix
\onecolumn

\begin{center}
{\huge Appendix}
\end{center}

\section{Notations}
\label{app:notation}
For better understanding, this section provides notation conversions for some of the key concepts in the main text beyond matrix-vector notation. 

\begin{itemize}
\item $Q$-value
\begin{align*}
\vq_\theta
&=\sum_{i=0}^{\infty}(\gamma P\Pi_\theta)^i\vr
\\
q_\theta(s,a)
&=\EE_{A_i\sim\pi_\theta(S_i,\cdot),S_{i+1}\sim P(\cdot|S_i,A_i)}
\left[\sum_{i=0}^\infty\gamma^i r(S_i,A_i)\middle|
S_0=s,A_0=a\right]
\end{align*}
\item The stationary distribution's recursive form (\ref{eq:d_recursion_matrix})
\begin{align*}
\vd_\theta 
&= (1-\gamma)\Pi_\theta^\top\vmu_0
+\gamma \Pi_\theta^\top P^\top \vd_\theta
\\
d_\theta(s,a)
&=(1-\gamma)\mu_0(s)\pi_\theta(s,a)
+\gamma\sum_{\widetilde{s},\widetilde{a}}
d_\theta(\widetilde{s},\widetilde{a})P(s|\widetilde{s},\widetilde{a})
\pi_\theta(s,a)
\end{align*}
\item The cumulative reward objective (\ref{eq:cumulative_obj}), actor objective (\ref{eq:actor_obj}) and critic objective (\ref{eq:q_obj}):
\begin{align*}
J(\vtheta)
&=(1-\gamma)
\vmu_0^\top\Pi_\theta\sum_{i=0}^\infty(\gamma P\Pi_\theta)^i\vr
&&=(1-\gamma)\EE_{S_0\sim\mu_0,A_i\sim\pi_\theta(S_i,\cdot),S_{i+1}\sim P(\cdot|S_i,A_i)}
\left[\sum_{i=0}^\infty\gamma^i r(S_i,A_i)\right]
\\
&=(1-\gamma)\vmu_0^\top\Pi_\theta\vq_\theta
&&=(1-\gamma)\EE_{S_0\sim\mu_0,A_0\sim\pi_\theta(S_0,\cdot)}
\left[q_\theta(S_0,A_0)\right]
\\
&=\vd_\theta^\top\vr
&&=\EE_{(S,A)\sim d_\theta}[r(S,A)]
\\
J_\pi(\vtheta,\vphi)
&= (1-\gamma)\vmu_0^\top \Pi_\theta \vq_\phi
&&=(1-\gamma)\EE_{S_0\sim\mu_0,A_0\sim\pi_\theta(S_0,\cdot)}
\left[q_\phi(S_0,A_0)\right]
\\
J_q(\vtheta,\vphi)
&=\frac{1}{2}
\|\vr + \gamma P\Pi_\theta \vq_\phi - \vq_\phi\|_{\vd}^2
&&=\frac{1}{2}
\EE_{(S,A)\sim d}\left[\left(r(S,A)
+\gamma
\EE_{S'\sim P(\cdot|S,A),A'\sim\pi_\theta(S',\cdot)}
[q_\phi(S',A')]
-q_\phi(S,A)
\right)^2\right]
\\
\end{align*}
\item The policy gradient (\ref{eq:policy_gradient}, \ref{eq:PG_grad}), actor$_g$ (\ref{eq:sutton_grad}, \ref{eq:PG_grad}) and actor$_o$ (\ref{eq:PI_grad_cont}) gradients
\begin{align*}
\nabla_\theta J 
&=H_\theta\Delta(\Xi^\top\vd_{\Scal,\theta})\vq_\theta
&&= \sum_{s} d_{\Scal,\theta}(s) \sum_a q_\theta(s, a) \partial_\theta \pi_\theta(s,a)
\\
\nabla_\theta^\phi J
&=H_\theta\Delta(\Xi^\top\vd_{\Scal,\theta})\vq_\phi
&&= \sum_{s} d_{\Scal,\theta}(s) \sum_a q_\phi(s, a) \partial_\theta \pi_\theta(s,a)
\\
\partial_\theta J_\pi 
&=(1-\gamma)H_\theta\Delta(\Xi^\top\vmu_0)\vq_\phi
&&=(1-\gamma)\sum_s \mu_0(s)
\sum_a q_\phi(s,a)\partial_\theta\pi_\theta(s,a)
\end{align*}
\item The gap between policy gradient and actor gradients (\ref{eq:three_term_obj}, \ref{eq:pg_actor_g_diff})
\begin{align*}
\nabla_\theta J 
&=\nabla_\theta^\phi J
+ \partial_\theta (\vd^\top_\theta\vdelta'_{\theta,\phi})
&&=\nabla_\theta^\phi J
+\partial_\theta\EE_{(S,A)\sim d_\theta}
[\delta'_{\theta,\phi}(S,A)]
\\
&=\partial_\theta J_\pi 
+ \partial_\theta (\vd^\top_\theta\vdelta_{\theta,\phi})
&&=\partial_\theta J_\pi 
+\partial_\theta\EE_{(S,A)\sim d_\theta}
[\delta_{\theta,\phi}(S,A)]
\end{align*}
\end{itemize}

\section{Proofs}
\label{app:proofs}

\stationary*

\begin{proof}
By the chain rule, we can calculate $\Upsilon$ as
\begin{align}
\Upsilon=H_\theta\widetilde{\Upsilon}
\label{eq:chain_rule_upsilon}
\end{align}
where $\widetilde{\Upsilon}_{\widetilde{s}\widetilde{a},sa}=\frac{\partial d_\theta(s,a)}{\partial \pi_\theta(\widetilde{s},\widetilde{a})}$ and recall that $(H_\theta)_{i,sa}=\frac{\partial \pi_\theta(s,a)}{\partial\theta_i}$ as defined in \cref{eq:actor_jacobian}. 
Next we show how to calculate $\widetilde{\Upsilon}$ using the implicit function theorem. 

Based on \cref{eq:d_recursion_matrix}, define $\vf(\vd,\vpi)$ as
\begin{align}
\vf(\vd,\vpi) = (I-\gamma\Pi^\top P^\top)\vd-(1-\gamma)\Pi^\top\vmu_0
\end{align}
We know from \cref{eq:d_recursion_matrix} that $\vf(\vd_\theta,\vpi_\theta)=\zerovec_{|\Scal||\Acal|}$.
The Jacobian w.r.t.\ $\vd$ at $(\vd_\theta,\vpi_\theta)$ is
\begin{align}
\left.\frac{\partial\vf}{\partial\vd}\right|_{\vd=\vd_\theta,\vpi=\vpi_\theta} 
= I-\gamma\Pi_\theta^\top P^\top = \Psi_\theta^\top
\end{align}
which is invertible under regular assumptions (recall that $\gamma<1$).
As for $\left.\frac{\partial \vf}{\partial \vpi}\right|_{\vd=\vd_\theta,\vpi=\vpi_\theta}$, 
we can see that it is diagonal because
\begin{align}
f(s,a)
= d(s,a)-\gamma\pi(s,a)
\sum_{\widetilde{s},\widetilde{a}}
P(s|\widetilde{s},\widetilde{a})d(\widetilde{s},\widetilde{a})
-(1-\gamma)\pi(s,a)\mu_0(s)
\end{align}
and it does not depend on policy values other than $\pi(s,a)$.
Then
\begin{align}
\left.\frac{\partial f(s,a)}{\partial\pi(s,a)}
\right|_{\vd=\vd_\theta,\vpi=\vpi_\theta} 
= -\gamma\sum_{\widetilde{s},\widetilde{a}}
P(s|\widetilde{s},\widetilde{a})d_\theta(\widetilde{s},\widetilde{a})
-(1-\gamma)\mu_0(s)
=-d_{\Scal,\theta}(s)
\end{align}
where the last equality is due to \cref{eq:d_recursion_matrix}.
Broadcasting it to all actions, we get 
\begin{align}
\left.\frac{\partial \vf}{\partial\vpi}
\right|_{\vd=\vd_\theta,\vpi=\vpi_\theta} 
= -\Delta(\Xi^\top\vd_{\Scal,\theta})
\end{align}
Then by the implicit function theorem, we have
\begin{align}
\widetilde{\Upsilon}^\top
&=
-\left(\left.\frac{\partial \vf}{\partial\vd}
\right|_{\vd=\vd_\theta,\vpi=\vpi_\theta}\right)^{-1}
\left(\left.\frac{\partial \vf}{\partial\vpi}
\right|_{\vd=\vd_\theta,\vpi=\vpi_\theta}\right)
\\
&= (\Psi_\theta^\top)^{-1}\Delta(\Xi^\top\vd_{\Scal,\theta})
\\
\Longrightarrow \widetilde{\Upsilon}
&= \Delta(\Xi^\top\vd_{\Scal,\theta})\Psi_\theta^{-1}
\end{align}
This combined with \cref{eq:chain_rule_upsilon} completes the proof.
\end{proof}

\stackelberg*

\begin{proof}
The first term can be computed as
\begin{align}
\partial_q J_\pi
&=(1-\gamma)\Pi_\theta^\top\vmu_0.
\label{eq:D2f1}
\end{align}
Plugging \cref{eq:d_recursion_matrix,eq:D2f1,eq:D22f2} to \cref{eq:stack_grad_pi} gives
\begin{align}
\vg_{S,\theta}&=\partial_\theta J_\pi
-(\partial_\theta\partial_q J_q)^\top
(\Psi_\theta^\top D_\theta \Psi_\theta)^{-1}
(I-\gamma\Pi_\theta^\top P^\top)\vd_\theta 
\\
&=\partial_\theta J_\pi
-(\partial_\theta\partial_q J_q)^\top
\Psi_\theta^{-1}D_\theta^{-1} \vd_\theta 
\\
&=\partial_\theta J_\pi 
- \frac{1}{1-\gamma}(\partial_\theta\partial_q J_q)^\top\onevec_{|\Scal||\Acal|}
\end{align}
where the last equation is due to 
\begin{align}
\Psi_\theta^{-1}\onevec_{|\Scal||\Acal|}
=\left(\sum_{i=0}^\infty (\gamma P\Pi_\theta)^i\right)
\onevec_{|\Scal||\Acal|}
=\sum_{i=0}^\infty \gamma^i 
\left((P\Pi_\theta)^i\onevec_{|\Scal||\Acal|}\right)
=\sum_{i=0}^\infty \gamma^i 
\onevec_{|\Scal||\Acal|}
=\frac{1}{1-\gamma}\onevec_{|\Scal||\Acal|}
\end{align}
Under natural regularity assumptions that allow transposing derivative with summation, we can compute the gradient using \cref{eq:D2f2} as
\begin{align}
\vg_{S,\theta}
&=\partial_\theta J_\pi
+\frac{1}{1-\gamma}
\left[\partial_\theta\left(
		\Psi_\theta^\top D_\theta \vdelta_\theta
\right)\right]^\top \onevec_{|\Scal||\Acal|}
\\
&=\partial_\theta J_\pi
+\frac{1}{1-\gamma}
\partial_\theta\left[\left(
		\Psi_\theta^\top D_\theta \vdelta_\theta
\right)^\top \onevec_{|\Scal||\Acal|} \right]
\\
&=\partial_\theta J_\pi
+\partial_\theta\left[  \vdelta_\theta^\top D_\theta 
\onevec_{|\Scal||\Acal|} \right]
\qquad (\text{Note that } \Psi_\theta\onevec_{|\Scal||\Acal|}=(1-\gamma)\onevec_{|\Scal||\Acal|})
\\
&=\partial_\theta J_\pi
+\partial_\theta(\vd^\top_\theta\vdelta_\theta)
\end{align}
Therefore, the Stackelberg gradient 
$\vg_{S,\theta}$ 
is not only doing (partial) policy improvement, but also maximizing $\vd_\theta^\top\vdelta_\theta$.

Adding this term to the actor objective, apply \cref{eq:d_recursion_matrix}, and get the Stackelberg actor objective
\begin{align}
\max_{\vtheta}\ J_{S,\pi}(\vtheta,\vq)
&=(1-\gamma)\vmu_0^\top \Pi_\theta \vq
+\vd^\top_\theta\vdelta_\theta
\label{eq:stack_actor_obj}
\\
&=\vd_\theta^\top\Psi_\theta\vq
+\vd^\top_\theta(\vr-\Psi_\theta\vq)
\\
&=\vd_\theta^\top\vr
\end{align}
which is exactly the dual objective of the cumulative reward objective in \cref{eq:cumulative_obj}. 
This means the Stackelberg gradient is unbiased policy gradient. 
\end{proof}

\section{Soft Actor-Critic}
\label{app:soft_ac}

\subsection{Derivation of Res-SAC}
In this section, we derive the actor update in Res-SAC.

With an additional entropy term, the cumulative reward objective of SAC is given by
\begin{align}
J^{\text{SAC}}(\vtheta)
&=(1-\gamma)\vmu_0^\top\Pi_\theta\sum_{i=0}^\infty
(\gamma P\Pi_\theta)^i(\vr-\log\vpi_\theta)\\
&=(1-\gamma)\vmu_0^\top\Pi_\theta
\left[
\left(\vr+\sum_{i=1}^\infty
(\gamma P\Pi_\theta)^i(\vr-\log\vpi_\theta)
\right)
-\log\vpi_\theta\right]\\
&=(1-\gamma)\vmu_0^\top\Pi_\theta
(\vq_\theta-\log\vpi_\theta)\\
&=\vd_\theta^\top (\vr-\log\vpi_\theta)
\label{eq:ent_reg_obj}
\end{align}
where $\vq_\theta\defeq\vr+\sum_{i=1}^\infty
(\gamma P\Pi_\theta)^i(\vr-\log\vpi_\theta)$ is the action value accounting for all future entropy terms but \emph{excluding} the current entropy term (as defined as in the SAC paper).
Using a critic $\vq_\phi$, the actor and critic objectives are
\begin{alignat}{2}
\text{Actor}:\quad \max_{\vtheta}\ 
J^{\text{SAC}}_\pi(\vtheta,\vphi)
&=(1-\gamma)\vmu_0^\top\Pi_\theta
(\vq_\phi-\log\vpi_\theta)
\label{eq:sac_actor_obj}\\
\text{Critic}:\quad 
\min_{\vphi}\ 
J^{\text{SAC}}_q(\vtheta,\vphi)
&=\frac{1}{2}\|\vr+
\gamma P\Pi_\theta(\vq_\phi-\log\vpi_\theta)-\vq_\phi\|_\vd^2
\label{eq:sac_critic_obj}
\end{alignat}
Note that the original SAC paper uses a KL divergence minimization step for the actor update~\citep[Eq.(10)]{haarnoja2018soft}, which can be seen as a variant of \cref{eq:sac_actor_obj}.
More specifically, one can re-express the KL divergence as a Bregman divergence associated with the negative entropy $-\Hcal$~\citep{bridge}:
\begin{align}
\max_{\vtheta}\ -\EE_{S\sim\vmu_0}
\left[KL\left(\vpi_{\theta}(S,\cdot)
\middle\|\frac{\exp(\vq_{\phi}(S,\cdot))}{Z_\phi(S)}\right)\right]
&=\EE_{S\sim\vmu_0}\left[
-\vpi_\theta(S,\cdot)^\top
\nabla\Hcal\left(\frac{\exp(\vq_{\phi}(S,\cdot))}{Z_\phi(S)}\right)
+\Hcal(\vpi_\theta(S,\cdot))+C\right]
\\
&=\EE_{S\sim\vmu_0}\left[
\vpi_\theta(S,\cdot)^\top\vq_\phi(S,\cdot)
+\Hcal(\vpi_\theta(S,\cdot))+C'\right]
\\
&=
\vmu_0^\top\Pi_\theta
(\vq_\phi-\log\vpi_\theta)+C'
\end{align}
where $Z_\phi(s)$ is the partition function for state $s$ and $C,C'$ are some constants independent of $\vtheta$.
Thus \cref{eq:sac_actor_obj} is the same as the KL divergence minimization up to some constants and rescaling.

The first order derivatives of \cref{eq:sac_actor_obj,eq:sac_critic_obj} are
\begin{align}
\partial_\theta J^{\text{SAC}}_\pi
&=(1-\gamma)\EE_{S\sim\mu_0}
\left[\partial_\theta\sum_A\pi_\theta(S,A)(q_\phi(S,A)
-\log\pi_\theta(S,A))\right]
\\
&=(1-\gamma)\EE_{S\sim\mu_0,\epsilon}
\left[\partial_\theta\log\pi_\theta(S,A)
+\partial_A(q_\phi(S,A)
-\log\pi_\theta(S,A))\partial_\theta\pi_\theta(S,\epsilon)\right]
\label{eq:sac_first_term_grad}\\
\partial_{\phi} J^{\text{SAC}}_\pi
&=(1-\gamma)\Pi_\theta^\top\vmu_0
\\
\partial_{\phi} J^{\text{SAC}}_q
&=-\Psi_\theta^\top D\vdelta_E
\end{align}
where \cref{eq:sac_first_term_grad} uses the reparametrization trick~\citep{schulman2015gradient} with $a=\pi_\theta(s,\epsilon)$ for some random variable $\epsilon$, and $\vdelta_E=\vr+\gamma P\Pi_\theta(\vq_\phi-\log\vpi_\theta)-\vq_\phi$ is the residual of the critic accounting for the entropy of the next state.
The reparametrization trick is needed because the policy is predicting an action in a continuous action space.

The gap between $J^{\text{SAC}}(\vtheta)$ and $J_\pi^{\text{SAC}}(\vtheta,\vphi)$ is
\begin{align}
J^{\text{SAC}}(\vtheta)-J_\pi^{\text{SAC}}(\vtheta,\vphi)
&=\vd_\theta^\top (\vr-\log\vpi_\theta)
-(1-\gamma)\vmu_0^\top\Pi_\theta
(\vq_\phi-\log\vpi_\theta)
\\
&=\vd_\theta^\top (\vr-\log\vpi_\theta)
-\vd_\theta^\top\Psi_\theta
(\vq_\phi-\log\vpi_\theta)
\\
&=\vd^\top_\theta\vdelta_E
\label{eq:sac_obj_gap}
\end{align}
The question now becomes how to compute $\partial_\theta(\vd_\theta^\top\vdelta_E)$.
As in \cref{sec:gap_obj} (\ref{eq:d_delta_product_grad}), by the product rule, we have
\begin{align}
\partial_\theta(\vd_\theta^\top\vdelta_E)
=
\partial_\theta((\vd_\theta')^\top\vdelta_E)
+\partial_\theta(\vd_\theta^\top\vdelta'_E)
\end{align}
Using the reparametrization trick, $\partial_\theta((\vd_\theta')^\top\vdelta_E)$ can be computed as
\begin{align}
\EE_{(S,A)\sim d_\theta}[\partial_\theta\delta_E(S,A)]
&=\EE_{(S,A)\sim d_\theta,\widetilde{S}\sim P}
\left[\gamma\partial_\theta\sum_{\widetilde{A}}
\pi_\theta(\widetilde{S},\widetilde{A})(q_\phi(\widetilde{S},\widetilde{A})
-\pi_\theta(\widetilde{S},\widetilde{A}))\right]
\\
&=\gamma\EE_{(S,A)\sim d_\theta,\widetilde{S}\sim P,\widetilde{\epsilon}}
\left[\partial_\theta\log\pi_\theta(\widetilde{S},\widetilde{A})
+\partial_{\widetilde{A}}(q_\phi(\widetilde{S},\widetilde{A})-\log\pi_\theta(\widetilde{S},\widetilde{A}))
\partial_\theta\pi_\theta(\widetilde{S},\widetilde{\epsilon})\right]
\label{eq:sac_third_term_grad}
\end{align}
It can be combined with \cref{eq:sac_first_term_grad}, using the \cref{eq:d_recursion_matrix}, to get
\begin{align}
\partial_\theta[(1-\gamma)\vmu_0^\top\Pi_\theta(\vq_\phi-\log\vpi_\theta)
+(\vd'_\theta)^\top\vdelta_E]
=\EE_{(S,A)\sim d_\theta}
\left[\partial_\theta\log\pi_\theta(S,A)
+\partial_{A}(q_\phi(S,A)-\log\pi_\theta(S,A))
\partial_\theta\pi_\theta(S,\epsilon)\right]
\label{eq:sac_first_and_third_term_grad}
\end{align}
\cref{eq:sac_first_and_third_term_grad} explains the original SAC implementation for the actor update, except for using $\vd_\theta$ instead of a replay buffer to compute the expectation.
The final term, $\partial_\theta(\vd_\theta^\top\vdelta'_E)$, is optimizing a policy to maximize $\vdelta_E$ as a fixed reward. 
It is also the residual reward for learning a res-critic for Res-SAC.

\subsection{Derivation of Stack-SAC}
In this section, we derive the actor update in Stack-SAC.

The second order derivative $\partial_\phi^2 J_q$ remains the same as \cref{eq:D22f2}.
The Stackelberg gradient now becomes
\begin{align}
\vg^{\text{SAC}}_{S,\theta}
=\partial_\theta J_\pi
+\partial_\theta
(\Psi_\theta^\top D\vdelta_E)^\top\Psi_\theta^{-1}D\vd_\theta
=\partial_\theta J_\pi+\partial_\theta (\vd_\theta^\top\vdelta_E).
\end{align}
Then based on \cref{eq:sac_obj_gap}, $\vg^{\text{SAC}}_{S,\theta}=\nabla_\theta J^{\text{SAC}}$ is an unbiased policy gradient for SAC.

\section{Experiment Details}
\label{app:exp_details}

\newcommand{\subfigheight}{0.26\textwidth}

\subsection{Tabular}
\cref{subfig:fourroom_env} shows the FourRoom environment where goal is to reach a particular cell. 
The initial state distribution is a uniform distribution over all unoccupied cells.
The reward is 1 for reaching the goal and 0 otherwise.

\subsubsection{Additional Dynamic Programming Experiments}

We investigate the effects of different gradient methods (Actor$_o$, Actor$_g$ or Policy Gradient (PG)) combined with different critic updates (Bellman residual minimization (BR) or temporal difference iteration (TD)) in the dynamic programming setting, where the reward $\vr$ and the transition $P$ are assumed to be known.
This showcases the performance of different algorithms in the ideal scenario. 

Specifically, the parameters of the softmax policy are the logits, and the critic is directly parametrized (i.e., it is tabular with a scaler value for each state-action pair). 
Using $\vr$ and $P$, one can compute the Actor$_o$ gradient $\partial_\theta J_\pi$ (\ref{eq:PI_grad_cont}), Actor$_g$ gradient $\nabla_\theta^\phi J$ (\ref{eq:sutton_grad}) and PG (\ref{eq:policy_gradient}) directly, and apply them to the policy parameters.
As for the critic, BR uses the full gradient (\ref{eq:D2f2}) while TD uses semi-gradient (\ref{eq:semi_critic_grad}).
The critic TD error loss is weighted according to the on-policy distribution $\vd_\theta$.
Both the actor and the critic use Adam optimizer, with respective learning rates of 0.01 and 0.02.

\cref{subfig:fr_Jpi_DP,subfig:fr_Jq_DP} show the average return of the policy $J_\pi$ and the critic objective value $J_q$ respectively.
There are a few observations.
(1)~Policy gradient quickly converges to the optimal performance, regardless of whether BR or TD is used.
The performance of PG+BR and PG+TD are hardly distinguishable in \cref{subfig:fr_Jpi_DP}, showing that if one can estimate PG accurately, the critic update may be less important.
(2)~Actor-Critic~(AC) is slower than PG to achieve optimal performance.
Even with much more iterations, both Actor$_o$-Critic (A$_o$C) and Actor$_o$-Critic~(A$_g$C) are very slow to reach the final performance of PG.
Furthermore, this happens even when the critic is providing accurate estimates of the $Q$-values (as $J_q$ is very small after 2000 iterations).
This indicates that PG can be a better choice as long as one has access to it, and our effort of closing the gap between AC and PG is meaningful in practice.
(3)~TD performs better than BR for both A$_o$C and A$_g$C. 
However, this is not always the case~\citep{scherrer2010should}.

\begin{figure*}[t]
\centering
\subcaptionbox{The FourRoom environment
\label{subfig:fourroom_env}}{%
  \includegraphics[height=\subfigheight]{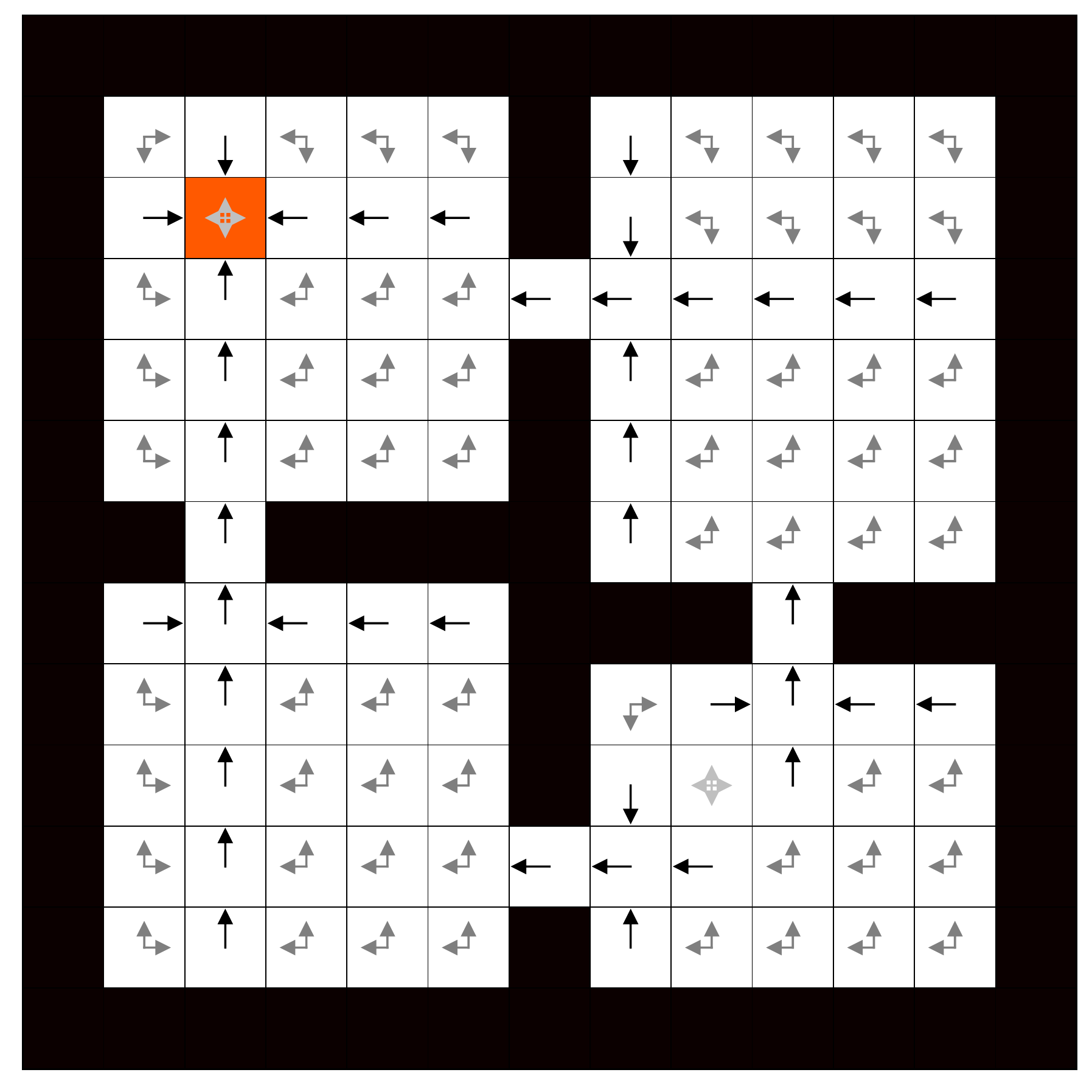}
}
\subcaptionbox{Policy performance $J_\pi$
\label{subfig:fr_Jpi_DP}}{%
  \includegraphics[height=\subfigheight]{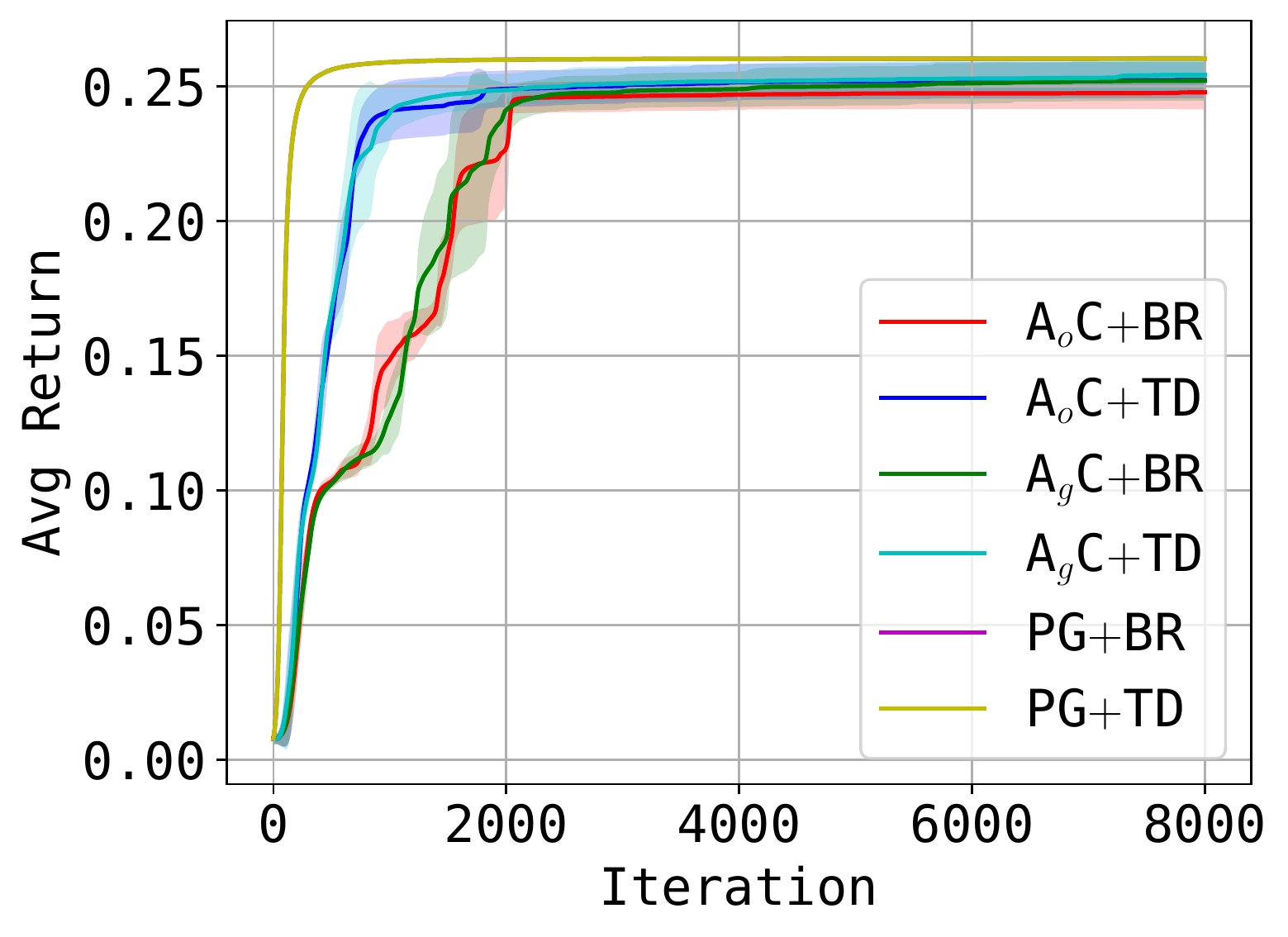}
}
\subcaptionbox{Objective value $J_q$
\label{subfig:fr_Jq_DP}}{%
  \includegraphics[height=\subfigheight]{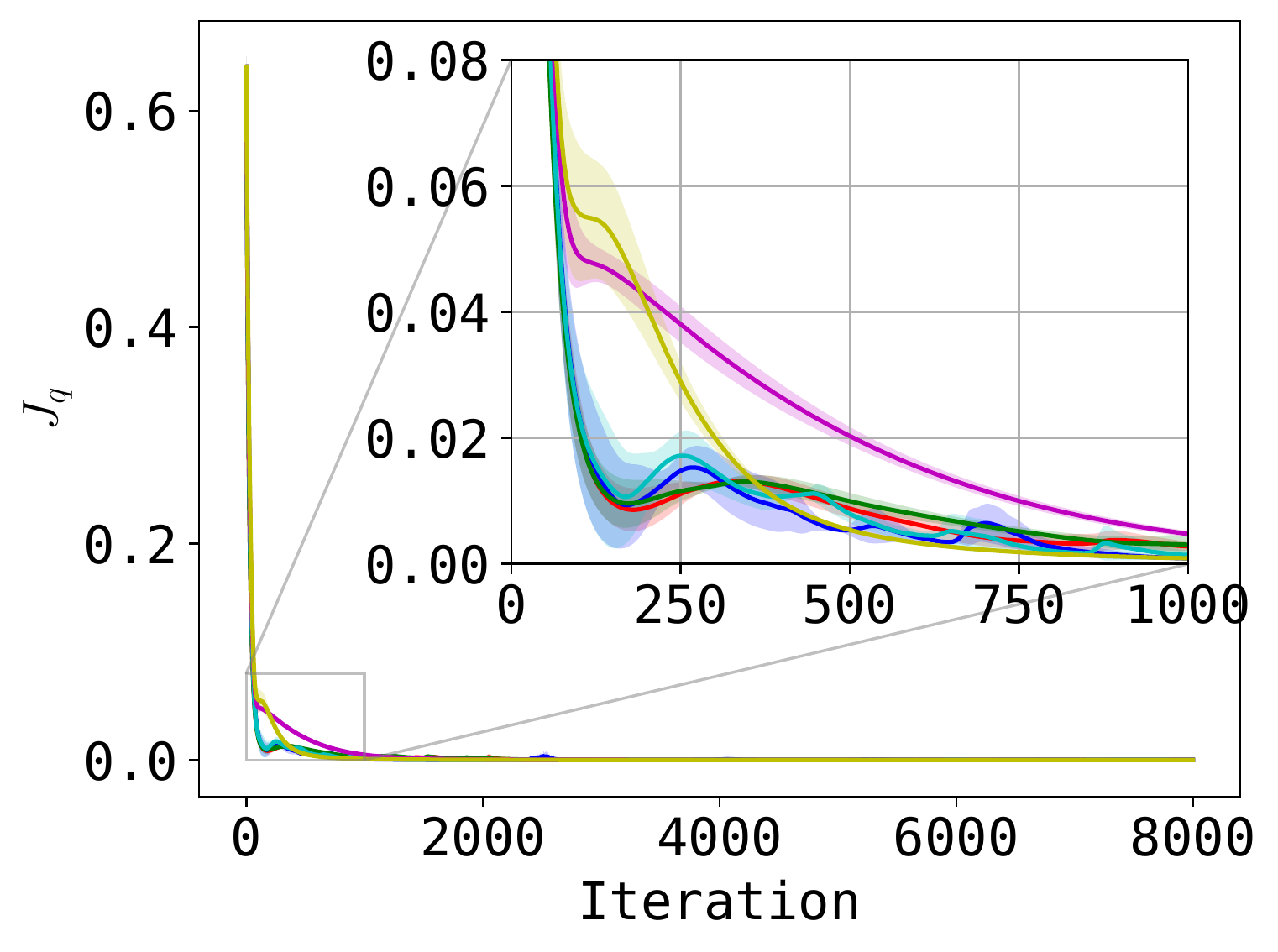}
}
\caption{FourRoom with a goal and results of dynamic programming}
\label{fig:four_room_env}
\end{figure*}

\subsubsection{Sample-Based Experiments}
\begin{table}
\centering
\begin{tabular}{ |p{6cm}||p{6cm}|  }
 \hline
 \multicolumn{2}{|c|}{Hyperparameters for four-room domain experiments} \\
 \hline
 Parameter & Value\\
 \hline
 optimizer   & Adam\\
 learning rate for actor/policy &   $1 \cdot 10^{-2}$ \\
 learning rate for critic &   $2 \cdot 10^{-2}$ \\
 discount ($\gamma$) & $0.9$ \\
 environment steps per gradient update & $300$ (episode length) \\
 batch size & $300$ \\
 Stack-AC: value of $\eta$ & $0.5$ \\
 Res-AC: learning rate for res-critic &   $2 \cdot 10^{-2}$ \\
 \hline
\end{tabular}
\caption{Hyperparameters used for Actor$_o$-Critic, Actor$_g$-Critic, Stack-AC, and Res-AC for the sample-based experiments on the four-room domain.}
\label{tab:sharedparamsfourroom}
\end{table}

\begin{figure*}
\begin{minipage}{0.45\textwidth}
\begin{algorithm}[H]
\small
\centering
\caption{Actor$_o$-Critic}
\begin{algorithmic}
\State Initialize $\vtheta$, $\vphi$
\State $\mathcal{O} \leftarrow \varnothing$
\For{each episode}
    \State $\mathcal{D} \leftarrow \varnothing$
    \State $s_0 \sim \mu_0$
    \State $\mathcal{O} \leftarrow \mathcal{O} \cup \{s_0\}$
    \For{each environment step}
        \State $a_t \sim \pi(a_t | s_t)$ 
        \State $s_{t+1} \sim P(s_{t+1} | s_t, a_t)$
        \State $\mathcal{D} \leftarrow \mathcal{D} \cup \{(s_t, a_t, r(s_t, a_t), s_{t+1}, a_{t+1})\}$
    \EndFor
    \State $\vtheta \leftarrow \vtheta + \alpha_\theta \partial_\theta \widetilde{J}_\pi^o$ 
    \State $\vphi \leftarrow \vphi - \alpha_\phi \partial_\phi \widetilde{J}_q$
\EndFor
\end{algorithmic}
\label{alg:actor_o_critic}
\end{algorithm}
\end{minipage}
\hfill
\begin{minipage}{0.45\textwidth}
\begin{algorithm}[H]
\centering
\small
\caption{Actor$_g$-Critic}
\begin{algorithmic}
\State Initialize $\vtheta$, $\vphi$
\For{each episode}
    \State $\mathcal{D} \leftarrow \varnothing$
    \For{each environment step}
        \State $a_t \sim \pi(a_t | s_t)$ 
        \State $s_{t+1} \sim P(s_{t+1} | s_t, a_t)$
        \State $\mathcal{D} \leftarrow \mathcal{D} \cup \{(s_t, a_t, r(s_t, a_t), s_{t+1}, a_{t+1})\}$
    \EndFor
    \State $\vtheta \leftarrow \vtheta + \alpha_\theta \partial_\theta \widetilde{J}_\pi^g$ 
    \State $\vphi \leftarrow \vphi - \alpha_\phi \partial_\phi \widetilde{J}_q$
\EndFor
\end{algorithmic}
\label{alg:actor_g_critic}
\end{algorithm}
\end{minipage}
\vspace{-15pt}
\end{figure*}

\begin{figure*}
\begin{minipage}{0.45\textwidth}
\begin{algorithm}[H]
\small
\centering
\caption{Stack-AC}
\begin{algorithmic}
\State Initialize $\vtheta$, $\vphi$
\State $\mathcal{O} \leftarrow \varnothing$
\For{each episode}
    \State $\mathcal{D} \leftarrow \varnothing$
    \State $s_0 \sim \mu_0$
    \State $\mathcal{O} \leftarrow \mathcal{O} \cup \{s_0\}$
    \For{each environment step}
        \State $a_t \sim \pi(a_t | s_t)$ 
        \State $s_{t+1} \sim P(s_{t+1} | s_t, a_t)$
        \State $\mathcal{D} \leftarrow \mathcal{D} \cup \{(s_t, a_t, r(s_t, a_t), s_{t+1}, a_{t+1})\}$
    \EndFor
    \State $\vtheta \leftarrow \vtheta + \alpha_\theta \widetilde{\vg}_{S,\theta}^{\text{semi}}$ 
    \State $\vphi \leftarrow \vphi - \alpha_\phi \partial_\phi \widetilde{J}_q$
\EndFor
\end{algorithmic}
\label{alg:stack_ac}
\end{algorithm}
\end{minipage}
\hfill
\begin{minipage}{0.45\textwidth}
\begin{algorithm}[H]
\centering
\small
\caption{Res-AC}
\begin{algorithmic}
\State Initialize $\vtheta$, $\vphi$
\For{each episode}
    \State $\mathcal{D} \leftarrow \varnothing$
    \For{each environment step}
        \State $a_t \sim \pi(a_t | s_t)$
        \State $s_{t+1} \sim P(s_{t+1} | s_t, a_t)$
        \State $\mathcal{D} \leftarrow \mathcal{D} \cup \{(s_t, a_t, r(s_t, a_t), s_{t+1}, a_{t+1})\}$
    \EndFor
    \State $\vtheta \leftarrow \vtheta + \alpha_\theta \partial_\theta \widetilde{J}_\pi^\text{res}$ 
    \State $\vphi \leftarrow \vphi - \alpha_\phi \partial_\phi \widetilde{J}_q$
    \State $\vpsi \leftarrow \vpsi - \alpha_\psi \partial_\psi \widetilde{J}_w$
\EndFor
\end{algorithmic}
\label{alg:res_ac}
\end{algorithm}
\end{minipage}
\vspace{-15pt}
\end{figure*}

This section refers to the FourRoom experiments in \cref{sec:fourroom} in the main text. We implement sample-based algorithms which follow the Actor$_o$-Critic~(A$_o$C), Actor$_g$-Critic~(A$_g$C), Stack-AC, and Res-AC updates. Hyper-parameters are listed in \cref{tab:sharedparamsfourroom}.

A$_o$C uses the following procedure. We use two replay buffers, a replay buffer $\mathcal{O}$ which will store states from the initial state distribution of the environment, and a replay buffer $\mathcal{D}$ which will store transitions from the most recent episode run using the current policy. Concretely, $\mathcal{O}$ stores samples from $\vmu_0$ whereas $\mathcal{D}$ stores samples from the on-policy distribution $\vd_\theta$. At the beginning of the training procedure, we initialize $\mathcal{O}$ to be empty. At the beginning of each episode, we initialize $\mathcal{D}$ to be empty, and we add the initial state sampled from the environment to the initial state replay buffer $\mathcal{O}$. For each step in the environment before the episode terminates, we add the current transition to the replay buffer $\mathcal{D}$. After the episode terminates, we apply gradient updates to the actor (policy) and the critic. 
Stack-AC follows this same procedure but with different actor updates. 
A$_g$C and Res-AC follow the same procedure except for not having an initial state buffer and using different actor updates.
Res-AC additionally includes a res-critic update. 
For all algorithms, we compute a single critic update (and res-critic update for Res-AC) for each actor-update to product \cref{fig:four_room_sample}. 
To produce \cref{fig:four_room_res_effect}, we updated the res-critic $5$ times for each critic update to obtain a more accurate res-critic in order to illustrate the res-critic's ability to close the gap between the critic's prediction and the true return. 

For all algorithms we compute the gradient update for the critic as follows. We sample a batch of transitions $\mathcal{B}_D$ from the replay buffer $\mathcal{D}$ and compute the following loss function for the critic to minimize:
\begin{align*}
    \widetilde{J}_q = \frac{1}{|\mathcal{B}_D|}\sum_{(s, a, r, \widetilde{s}, \widetilde{a}) \in \mathcal{B}_D} 
    (q_\phi(s, a) - (r + \gamma q'_\phi(\widetilde{s}, \widetilde{a})))^2 
\end{align*}
where $\widetilde{s}, \widetilde{a}$ are the next state and next action in the transition, and $q'_\phi(\widetilde{s}, \widetilde{a})$ indicates that no gradients pass through $q_\phi(\widetilde{s}, \widetilde{a})$, i.e. it is treated as a target network. The gradient of this loss, $\partial_\phi \widetilde{J}_q$, is a sample-based estimate of $\partial_\phi J_q$ (\ref{eq:D2f2}). 
Note, however, that it uses a semi-gradient instead of Bellman residual minimization in $J_q$ (which yields the full/total gradient).

\textbf{Actor$_o$-Critic}

To compute the gradient update for the actor in A$_o$C, we sample a batch of initial states $\mathcal{B}_O$ from the replay buffer $\mathcal{O}$ and compute the following objective function for the actor to maximize: 
\begin{align*}
    \widetilde{J}_\pi^o = \frac{1}{|\mathcal{B}_O|}\sum_{s \in \mathcal{B}_O} \log \pi_\theta(a | s) q_\phi(s, a).
\end{align*}
where the action $a$ is sampled from the current policy $\pi_\theta$ for each state $s$ in $\mathcal{B}_O$. The gradient of this objective, $\partial_\theta \widetilde{J}_\pi^o$, is a sample-based estimate of $\partial_\theta J_\pi$ (\ref{eq:PI_grad_cont}). For the complete pseudocode of A$_o$C, see \cref{alg:actor_o_critic}.

\textbf{Actor$_g$-Critic}

To compute the gradient update for the actor in A$_g$C, we sample a batch of transitions $\mathcal{B}_D$ from the replay buffer $\mathcal{D}$ and compute the following objective for the actor to maximize: 
\begin{align*}
    \widetilde{J}_\pi^g = \frac{1}{|\mathcal{B}_D|}\sum_{(s, a, r, \widetilde{s}, \widetilde{a}) \in \mathcal{B}_D} \log \pi_\theta(a | s) q_\phi(s, a).
\end{align*}
The gradient of this objective, $\partial_\theta \widetilde{J}_\pi^g$, is a sample-based estimate of $\nabla_\theta^\phi J$ (\ref{eq:sutton_grad}). 
For the complete pseudocode of Actor$_g$-Critic, see \cref{alg:actor_g_critic}.

\textbf{Stack-AC}

We compute the gradient update for the actor as follows. Then the Stackelberg gradient based on semi-critic-gradient is given by
\begin{align*}
\vg_{S,\theta}^{\text{semi}}
&\defeq\partial_\theta J_\pi
-(\partial_\theta\partial_q^{\text{semi}} J_q)^\top\!
((\partial_q^{\text{semi}})^2 J_q)^{-1}\!
(\partial_q^{\text{semi}} J_\pi)
\end{align*}
In the above equation, we replace $J_q$ with $\widetilde{J}_q$ and $J_\pi$ with $\widetilde{J}_\pi$. This gives us our actor update for Stack-AC:
\begin{align*}
\widetilde{\vg}_{S,\theta}^{\text{semi}}
&\defeq\partial_\theta \widetilde{J}_\pi
-(\partial_\theta\partial_q^{\text{semi}} \widetilde{J}_q)^\top\!
((\partial_q^{\text{semi}})^2 \widetilde{J}_q)^{-1}\!
(\partial_q^{\text{semi}} \widetilde{J}_\pi)
\end{align*}
For the complete pseudocode of Stack-AC, see \cref{alg:stack_ac}.

\textbf{Res-AC}

We compute the gradient update for the res-critic as follows. We sample a batch of transitions $\mathcal{B}_D$ from the replay buffer $\mathcal{D}$ and compute the following loss function for the res-critic to minimize:
\begin{align*}
    \widetilde{J}_w = \frac{1}{|\mathcal{B}_D|}\sum_{(s, a, r, \widetilde{s}, \widetilde{a}) \in \mathcal{B}_D} 
    (w_\psi(s, a) - (\delta'_\theta(s,a) + \gamma w'_\psi(\widetilde{s}, \widetilde{a})))^2 
\end{align*}
where $\delta'_\theta(s,a) = r(s,a) + \gamma q'_\phi(\widetilde{s}, \widetilde{a}) - q'_\phi(s, a)$ is the TD-error computed using the current critic. 
Note that $w'_\psi(\widetilde{s}, \widetilde{a})$ indicates that that no gradients pass through $w_\psi(\widetilde{s}, \widetilde{a})$, i.e.\ it is treated as a target network. 
The gradient of this loss, $\partial_\psi \widetilde{J}_w$, is a sample-based estimate of the gradient of $J_w$ (\ref{eq:res_q_obj}). Note, however, that it uses a semi-gradient instead of Bellman residual minimization in $J_w$ (which yields the full/total gradient). 

To compute the gradient update for the actor in Res-AC, we sample a batch of transitions $\mathcal{B}_D$ from the replay buffer $\mathcal{D}$ and compute the following objective for the actor to maximize: 
\begin{align*}
    \widetilde{J}_\pi^\text{res} = \frac{1}{|\mathcal{B}_D|}\sum_{(s, a, r, \widetilde{s}, \widetilde{a}) \in \mathcal{B}_D} \log \pi_\theta(a | s) q_\phi(s, a) + \log \pi_\theta(a | s) w_\psi(s, a).
\end{align*}
The gradient of this objective, $\partial_\theta \widetilde{J}_{\pi}^{\text{res}}$, is a sample-based estimate of the actor update in Res-AC (\ref{eq:res_ac_a_update}). For the complete pseudocode of Res-AC, see \cref{alg:res_ac}.

\subsection{Continuous Control}
\begin{algorithm}[H]
\small
\centering
\caption{Res-SAC}
\begin{algorithmic}[1]
\State Initialize parameters $\theta$, $\phi$, $\psi$
\State Initialize replay buffer $\Dcal\leftarrow\emptyset$
\For{each iteration}
    \For{each environment step}
        \State $a_t \sim \pi_\theta(a_t | s_t)$ 
        \State $s_{t+1} \sim P(s_{t+1} | s_t, a_t)$
        \State $\Dcal \leftarrow \Dcal \cup \{(s_t, a_t, r(s_t, a_t), s_{t+1}, a_{t+1})\}$
    \EndFor
    \For{each critic step}
        \State $\phi \leftarrow \phi - \alpha_\phi \widehat{\nabla}J_Q(\phi)$
    \EndFor
    \For{each res-critic step}
        \State $\psi \leftarrow \psi - \alpha_\psi \widehat{\nabla}J_W(\psi)$
    \EndFor
    \For{each actor step}
        \State $\theta \leftarrow \theta - \alpha_\theta \widehat{\nabla}J_\pi(\theta)$
    \EndFor
\EndFor
\end{algorithmic}
\label{alg:res_sac}
\end{algorithm}

\begin{table}
\centering
\begin{tabular}{ |p{5cm}||p{5cm}|  }
 \hline
 \multicolumn{2}{|c|}{Hyperparameters for continuous control experiments} \\
 \hline
 Parameter & Value\\
 \hline
 optimizer   & Adam\\
 learning rate &   $3 \cdot 10^{-4}$ \\
 discount ($\gamma$) & $0.99$ \\
 replay buffer size & $10^6$ \\
 number of hidden layers & $2$ \\
 number of hidden units per layer & $128$ \\
 number of samples per minibatch & $128$ \\
 nonlinearity & ReLU \\
 target smoothing coefficient $(\tau)$ & $0.005$ \\
 target update interval & $1$ \\
 entropy target & $- \text{dim} (\mathcal{A})$ \\
 environment steps per gradient step & $10$ \\
 Stack-SAC: value of $\eta$ & $0.5$ \\
 Res-SAC: value of $c$ & $6.0$ for HalfCheetah-v2, $1.0$ for Reacher-v2, $4.0$ for Pendulum-v0\\
 \hline
\end{tabular}
\caption{Hyperparameters used for SAC, Stack-SAC, and Res-SAC for continuous control experiments.}
\label{tab:sharedparamscont}
\end{table}

Our training protocol for SAC, Stack-SAC, and Res-SAC follows the same training protoocl of SAC (\cite{haarnoja2018soft2}). Hyperparameters used for all algorithms are listed in \cref{tab:sharedparamscont}. 

\subsubsection{Res-SAC: Loss Functions}

Below, we present the loss functions and updates for the actor, critic, and res-critic of Res-SAC. 

Similar to SAC, Res-SAC uses a 
parametrized soft Q-function (critic) $Q_\phi(s, a)$, and a tractable policy (actor) $\pi_\theta(a | s)$. 
Additionally, Res-SAC uses a parametrized 
residual Q-function (res-critic) $W_\psi(s, a)$. 
The parameters of these networks are $\phi, \theta$, and $\psi$.

The soft Q-function parameters are trained exactly as in SAC \cite{haarnoja2018soft2}, but we write the objectives again here for clarity.
The soft Q-function (critic) parameters are trained to minimize the soft Bellman residual:
\begin{align*}
    J_Q(\phi) = \mathbb{E}_{(S, A) \sim \mathcal{D}} 
    \left[\frac{1}{2} \left(
    Q_\phi(S, A) - 
        \left(
        r(S, A) + \gamma \mathbb{E}_{S' \sim P, A'\sim\pi_\theta(S')} 
        \left[Q_{\bar{\phi}}(S',A')-\log\pi_\theta(A'|S')\right]
        \right)
    \right)^2\right]
\end{align*}
where $\mathcal{D}$ is a replay buffer containing previously sampled states and actions,
and the target soft $Q$-function $Q_{\bar{\phi}}$ uses an exponential moving average $\bar{\phi}$ of $\phi$ as done in the original SAC.

The residual Q-function have a similar objective, but the key differences are 
(1) the reward is based on the TD error of the critic and 
(2) the there is no entropy term.
Specifically, the residual Q-function (res-critic) parameters are trained to minimize the Bellman residual:
\begin{align*}
    J_W(\psi) = \mathbb{E}_{(S, A) \sim \mathcal{D}} 
    \left[\frac{1}{2} \left(
    W_\psi(S, A) - 
        \left(
        \widetilde{r}(S, A) + \gamma \mathbb{E}_{S' \sim P, A'\sim\pi_\theta(S')} 
        \left[W_{\bar{\psi}}(S', A')\right]
        \right)
    \right)^2\right]
\end{align*}
where $\widebar{\psi}$ is an exponential moving average of $\psi$.
The clipped reward $\widetilde{r}(s, a)$ is computed as follows:
\begin{align*}
    \widetilde{r}(s, a) = \text{clip}(\delta(s, a), -c, c)
    =\min(\max(\delta(s, a),-c), c)\qquad \text{for } c>0
\end{align*}
where 
\begin{align*}
    \delta(s, a) = r(s, a) + \gamma \mathbb{E}_{S' \sim P, A'\sim\pi_\theta(S')} [Q_{\bar{\phi}}(S', A')] - Q_\phi(s, a)
\end{align*}

The actor / policy is trained by minimizing the KL divergence:
\begin{align*}
    J_\pi(\theta) = \mathbb{E}_{S \sim \mathcal{D}}
    \left[{\text{KL}}
    \left(\pi_\theta(\cdot | S) 
    \middle\|
    \frac{\text{exp}\left[Q_\phi(S, \cdot)+Q_\psi(S, \cdot)\right]}{Z_{\phi,\psi}(S)}
\right)\right]
\end{align*}
    
All objectives above can be optimized with stochastic gradients: 
$\widehat{\nabla}J_Q(\phi)$, 
$\widehat{\nabla}J_W(\psi)$, and 
$\widehat{\nabla}J_\pi(\theta)$. 
The pseudocode for Res-SAC can be found in \cref{alg:res_sac}.

\subsubsection{Res-SAC: Sensitivity Analysis}
\label{app:clip_analysis}

The results in \cref{sec:contexp} suggest that actor-critic algorithms enjoy improved sample efficiency and final performance when their actor update rules are modified to follow the {\shortname} framework. 
In the continuous control tasks, we found that the performance of Res-SAC was dependent on the setting of an additional hyper-parameter: the clip value $c>0$ applied to the TD error $\vdelta_\theta$ that is used as the reward for the res-critic. 
On HalfCheetah-v2, we examine the sensitivity of Res-SAC to the clip value on the res-critic's reward (\cref{fig:ablation}).
Without clipping, training is highly unstable. 
Higher clip values improve stability, with a clip value of $6.0$ leading to the best performance.
We found a useful heuristic to select a clip value for Res-SAC: train SAC on the same task and use the maximum absolute TD error of the critic that occurs during training as the clip value $c$ for Res-SAC. 
As an example, we see that when training SAC on HalfCheetah-v2, the max TD error is between $5.0$ and $6.0$ (\cref{fig:sactderror}), and we find that a clip value of $6.0$ leads to the best performance of Res-SAC on HalfCheetah.

\begin{figure}[t]
\centering
\begin{minipage}{\halffigwidth}
  \centering
  \includegraphics[height=0.6\textwidth]{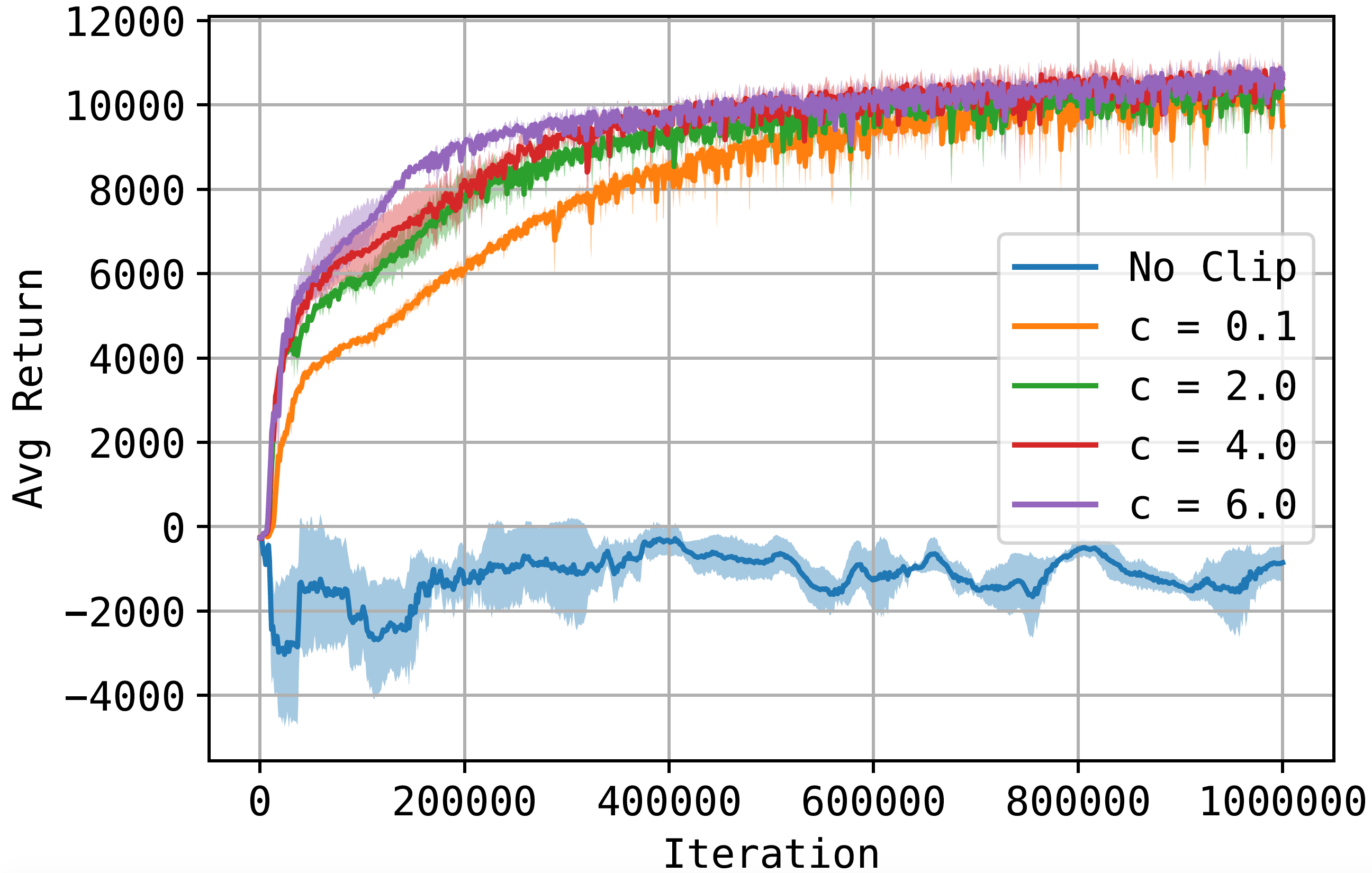}
  \caption{Sensitivity of Res-SAC to the clip value $c$ on the HalfCheetah-v2 task.}
  \label{fig:ablation}
\end{minipage}
\quad %
\begin{minipage}{\halffigwidth}
  \centering
  \includegraphics[height=0.6\textwidth]{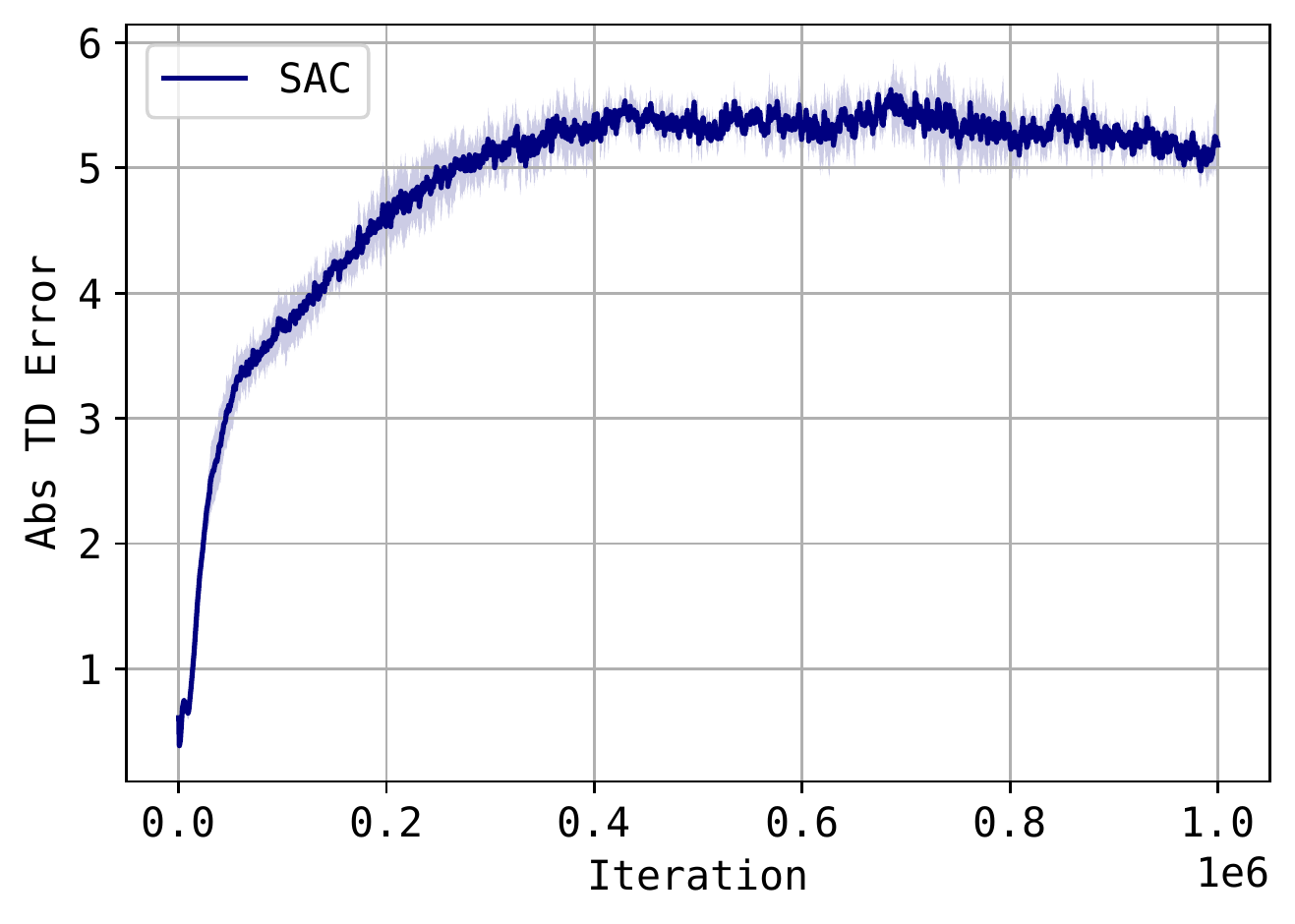}
  \caption{Absolute TD error while training SAC on HalfCheetah-v2.}
  \label{fig:sactderror}
\end{minipage}
\end{figure}

\end{document}